\documentclass{article}

\usepackage{microtype}
\usepackage{graphicx}
\usepackage{subcaption}
\usepackage{booktabs} 

\usepackage[preprint]{icml2026}

\usepackage{amsmath}
\usepackage{amssymb}
\usepackage{mathtools}
\usepackage{amsthm}
\usepackage{bm}

\usepackage{tabularx}
\usepackage{booktabs}
\usepackage{url}
\usepackage{multirow}
\usepackage{subcaption}
\usepackage{booktabs, tabularx, multirow, siunitx}
\usepackage{hyperref}
\usepackage[capitalize,noabbrev]{cleveref}

\newcolumntype{Y}{>{\centering\arraybackslash}X}

\theoremstyle{plain}
\newtheorem{theorem}{Theorem}[section]

\theoremstyle{definition}

\theoremstyle{remark}

\usepackage[textsize=tiny]{todonotes}

\icmltitlerunning{Learning Dynamic Stability Landscapes from Graph Topology}

\begin{document}

\twocolumn[
  \icmltitle{Learning Dynamic Stability Landscapes in Synchronization Networks}



  \icmlsetsymbol{equal}{*}

  \begin{icmlauthorlist}
    \icmlauthor{Christian Nauck}{pik}
    \icmlauthor{Junyou Zhu}{pik,tub_ml}
    \icmlauthor{Michael Lindner}{tub}
    \icmlauthor{Frank Hellmann}{pik}
  \end{icmlauthorlist}

  \icmlaffiliation{pik}{Department of Complexity Science, Postdam Institute for Climate Impact Research, Potsdam, Germany}
  \icmlaffiliation{tub}{Department of Digital Transformation in Energy Systems, Institute of Energy Technology, Technical University of Berlin, Germany}
  \icmlaffiliation{tub_ml}{Machine Learning Group, Technical University of Berlin, 10587 Berlin, Germany}

  \icmlcorrespondingauthor{Junyou Zhu}{junyou.zhu@pik-potsdam.de}
  \icmlcorrespondingauthor{All authors email}{nauck@pik-potsdam.de, hellmann@pik-potsdam.de}

  \icmlkeywords{Machine Learning, ICML}

  \vskip 0.3in
]



\printAffiliationsAndNotice{}  

\begin{abstract}

The robustness of synchronization is typically characterized by scalar, per-node stability indices whose dependence on topology is studied via network science or graph neural networks (GNNs). We propose a novel upstream task, learning \textit{stability landscapes}, which provide deeper insights into synchronization behavior and from which many such scalar indices can be derived. Crucially, we pioneer a graph-to-image prediction paradigm: learning image-like landscapes as per-node targets directly from graph topology, a formulation we are not aware of having been established elsewhere in the literature. To support this task, we release two datasets of 10,000 graphs each at 20 and 100 nodes with per-node landscape labels, based on a conceptual oscillator model, capturing power grid synchronization behavior. A GNN encodes topology and a CNN decoder renders per-node images, learned end-to-end with good in-distribution accuracy, generalizing across graph sizes and to realistic power grid topologies. This demonstrates that stability landscapes, while beyond the reach of conventional network science, are learnable from topology and open new avenues for moving beyond scalar stability indices in biology, neuroscience, and power grids.

\end{abstract}

\section{Introduction}
Networks of coupled oscillators play a fundamental role in modeling both natural and engineered systems. They provide a unifying framework for understanding complex dynamics across fields, such as biology, neuroscience, ecology, physics, and engineering. Systems including the heart, the brain, food webs, coupled lasers, chemical reactions, power grids, and even firefly populations can all be described as oscillators interacting on complex networks \cite{strogatzKuramotoCrawfordExploring2000,acebronSynchronizationPopulationsGlobally2000, pikovskySynchronizationUniversalConcept2001, acebronKuramotoModelSimple2005,pecoraClusterSynchronizationIsolated2014,rodriguesKuramotoModelComplex2016}. The behavior of these systems is strongly influenced by the underlying network topology which governs who interacts with whom and how perturbations spread.

An important phenomenon in oscillator networks is synchronization. Whether synchronization is desirable depends on the application. In the brain, excessive synchronization can signal dysfunction, as in epilepsy. In power grids, synchronization is essential for stable operation. This duality makes the robustness of the synchronous state a central question in networked complex systems. Understanding and controlling synchronization has practical implications that range from disrupting pathological synchronization in the brain \cite{tassPhaseResettingMedicine2007} to designing infrastructure with favorable stability properties \cite{yamamotoModularArchitectureFacilitates2023, menckHowBasinStability2013, menckHowDeadEnds2014,bernerWhatAdaptiveNeuronal2021}.

The resulting core question is whether synchrony remains stable under perturbations, especially local perturbations applied at a specific node $i$. Before introducing the technical framework, we illustrate the motivation using the stability landscape plots in \Cref{fig:landscape_intro}. The axes represent perturbation magnitudes: points near the origin correspond to small perturbations, while points farther away correspond to larger perturbations. Dark purple indicates stable outcomes, whereas lighter green and yellow indicate instability. Such landscape visualizations are common in complex-systems research, but generating them is computationally expensive. In this work, we show that these landscapes can be predicted accurately using machine learning.

\begin{figure}[h]
    \centering
    \begin{subfigure}{0.45\textwidth}
        \centering
        \includegraphics[width=\textwidth]{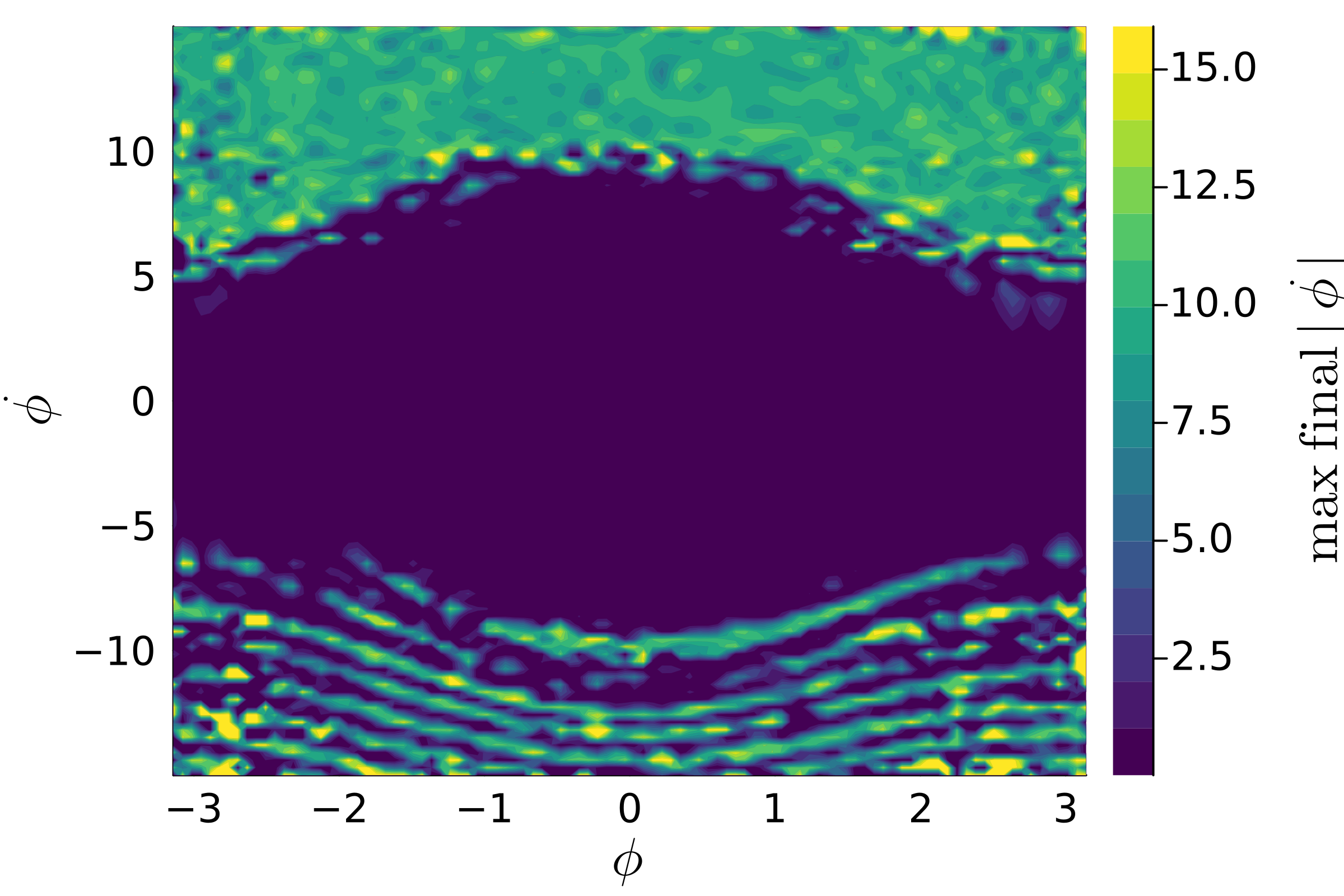}
    \end{subfigure}
    \hfill
    \begin{subfigure}{0.45\textwidth}
        \centering
        \includegraphics[width=\textwidth]{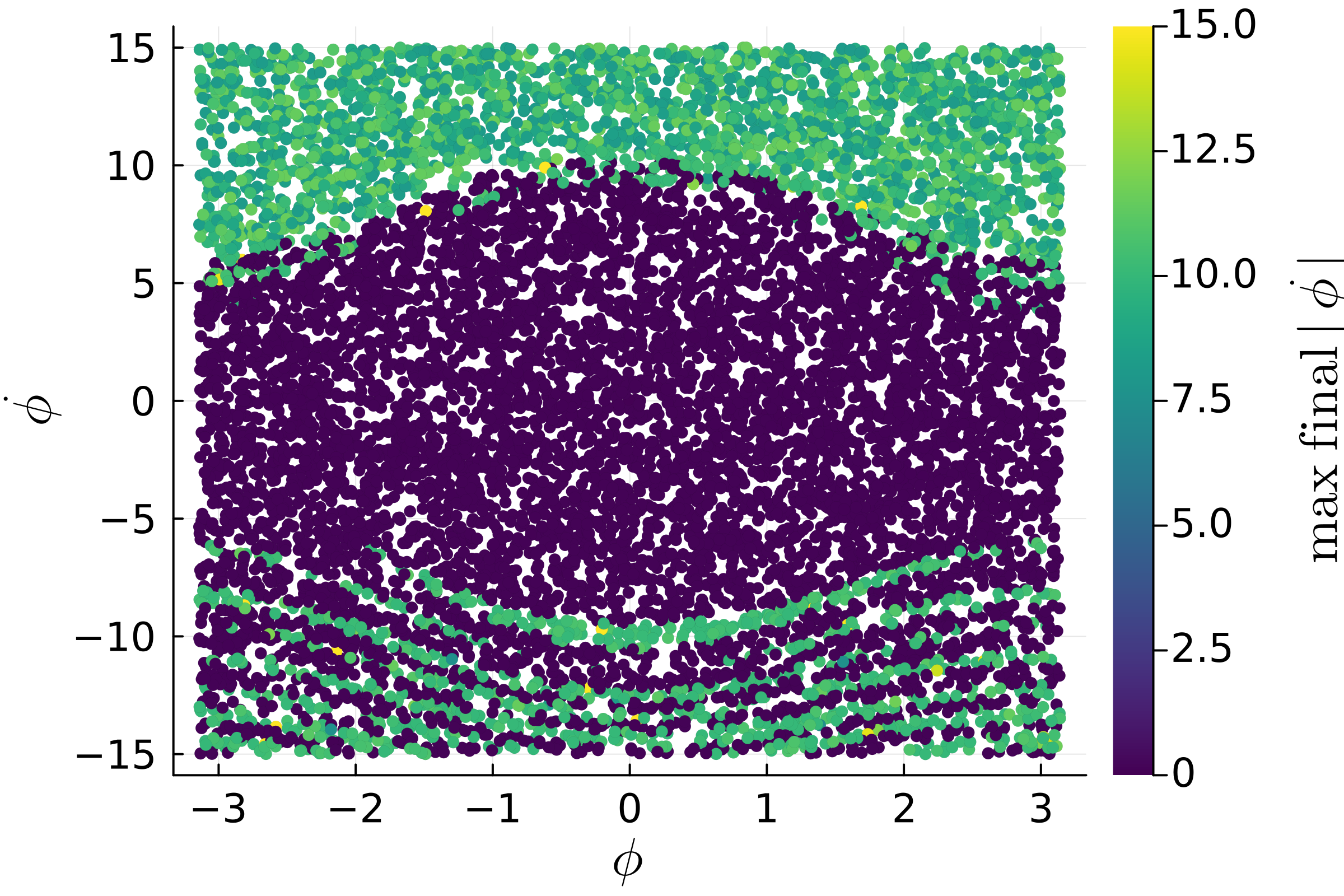}
    \end{subfigure}
    \caption{Top: Contour plot of the asymptotic deviation from synchrony. Bottom: its Monte Carlo origin landscape for 10,000 perturbations at the same node $i$. The axes represent the perturbations in $p = \{ \phi, \dot\phi \}$.  The color encodes the maximum final frequency deviation from the set point, with darker colors indicating more stable regions.}
    \label{fig:landscape_intro}
\end{figure}

Formally, consider a dynamical system on a graph $\mathcal G=(\mathcal V,\mathcal E,\mathbf X)$ with vertices $\mathcal V$, edges $\mathcal E$ and dynamical parameters $\mathcal{X}$. We focus on basin stability \cite{menckHowBasinStability2013}, where perturbations are modeled as displacements of the variables associated to a node by a perturbation parameters $p$, and a perturbation is labeled as stable if the system returns to synchrony from this initial condition as time goes to infinity. For a node $i\in\mathcal V$, the corresponding single-node basin stability (SNBS) is defined as

\begin{equation}
\label{eq:SN_i}
\text{SNBS}_i(\mathcal G) = \int \text{BS}(\mathcal G, i, p)\,\rho(p)\,dp,
\end{equation}

where $\text{BS}(\mathcal G,i,p)\in\{0,1\}$ indicates whether the synchronous state survives perturbation $p$ at node $i$, and $\rho(p)$ denotes the distribution from which perturbations are sampled over the full perturbation space. Thus, every node $i$ is labeled by the probability $\text{SNBS}_i$ that the system returns to synchrony under a randomly drawn perturbation.

Estimating $\text{SNBS}_i$ using Monte-Carlo integration to a given accuracy requires a fixed number of evaluations of $\text{BS}$ per node, so labeling the whole graph thus scales as $|\mathcal V||\mathcal E|$ in the best case. For conceptual models of power grids, this labeling has become an important tool for understanding the impact of topological features on overall stability properties \cite{witthautCollectiveNonlinearDynamics2022, kimBuildingBlocksBasin2016, nitzbonDecipheringImprintTopology2017}. Further it was shown in \cite{nauckPredictingBasinStability2022,nauckDynamicStabilityAnalysis2022,nauckDynamicStabilityAssessment2023,zhuNetworkMeasureEnrichedGNNs2026} that these labels can be well predicted by GNN methods. Not only are these GNN evaluation dramatically faster than evaluating $\text{SNBS}_i$ by sampling, they also scale as $|\mathcal E|$ with graph size.

A disadvantage  of predicting only $\text{SNBS}_i$ is that it requires choosing a meaningful perturbation distribution $\rho(p)$ when generating the dataset. Different situations of practical interest, and different stability aspects, are captured by different choices of $\rho$, or by more complex expectation values $\int O(p)\,\text{BS}(\mathcal G,i,p)\rho(p)\,dp$ of some observable $O$. This motivates us to introduce in this paper a new upstream task for stability prediction. 

Instead of averaging over the entire perturbation space at once, we decompose it into regions of similar perturbations. Let $h$ denote one such region, and let $\rho(p\mid h)$ be a distribution concentrated on region $h$, that is, sampling perturbations only within that part of the perturbation space. In the concrete setting studied here, $p=(\phi,\dot{\phi})$ lies in the phase--frequency plane, $\rho(p)$ corresponds to sampling perturbations over the full perturbation space, and $\rho(p\mid h)$ corresponds to sampling perturbations only within the region or grid cell $h$. We then define the basin stability landscape (LBS) for node $i$ as

\begin{equation}
\label{eq:L_i}
\text{LBS}_i(\mathcal G, h) = \int \text{BS}(\mathcal G, i, p)\,\rho(p\mid h)\,dp.
\end{equation}

Intuitively, $\text{LBS}_i(\mathcal G,\cdot)$ can be visualized as a stability landscape, that is, a heatmap over the perturbation space that highlights safe and unsafe regions for node $i$.
Specifically, Figure~\ref{fig:landscape_intro}\,(a) shows a contour rendering of such a landscape for one node.

Most importantly, the scalar score is recovered as a mixture of regional probabilities:
\begin{equation}
\label{eq:snbs_from_lbs}
\text{SNBS}_i(\mathcal G) \;=\; \int \rho(h)\, \text{LBS}_i(\mathcal G, h)\, dh.
\end{equation}

where $\rho(h)$ denotes the weight of region $h$ in the overall perturbation distribution. Hence, the landscape is a strictly richer object than the scalar basin stability score.
In this context, plotting $\text{BS}(\mathcal G,i,p)$ as a function of $p$ is typically done for illustrative purposes \cite{menckHowDeadEnds2014, hellmannNetworkinducedMultistabilityLossy2020,zhangDeeperSmallerHigherorder2024}.

As we are typically only interested in perturbations up to a maximum size, this leads naturally to a finite set of regions. Thus $\text{LBS}_i(\mathcal G, h)$ provides a labeling of the graph where every node is labeled by a vector. In the concrete model we will study below, the perturbation vector is two-dimensional, and it is natural to discretize it into boxes, leading to cell-wise labels $\text{LBS}_{i,m,n}\in[0,1]$. In this work we extend the publicly available $\text{SNBS}_i$ datasets for ensembles of oscillator networks \cite{nauckDynamicStabilityAnalysis2022,nauckDynamicStabilityAssessment2023} by these basin landscapes.

This dataset includes two graph ensembles, 10,000 graphs with 20 nodes each and 10,000 graphs with 100 nodes each, to provide a valuable testbed for evaluating models' generalization capabilities. Specifically, the availability of two distinct graph scales allows researchers to effectively assess out-of-distribution (OOD) generalization performance across graph sizes, which is especially important given that labeling a full graph scales at least as $|\mathcal V||\mathcal E|$. Consequently, an ML model capable of generalizing from smaller (e.g., 20-node graph) to larger networks(e.g., 100-node graph) could provide the only practically viable route towards applying complex stability measures in practice.

The main contributions of this work are as follows:  
\begin{itemize}
    \item We introduce a novel stability assessment task by predicting $\text{LBS}_i(\mathcal G,h)$ for the first time, which captures advantages of probabilistic approaches, while also enabling varied downstream tasks. 
    \item We present new datasets to support research on this task. The datasets consist of two ensembles, each containing 10,000 graphs with basin landscapes as prediction targets. They were generated using computationally expensive simulations requiring roughly \textbf{500,000} CPU hours.  
    \item As initial baselines, we propose graph neural networks (GNNs) as encoders, with multilayer perceptrons (MLPs) or convolutional neural networks (CNNs) as decoders, demonstrating the potential of machine learning methods to address this critical challenge in networked dynamical systems.
    \item We establish the first practical benchmark for stability landscape prediction. Our models reach about 85\% Structural similarity index measure (SSIM) in distribution and up to 78 \% SSIM under a size shift from 20 to 100 nodes. They transfer in a zero-shot setting to four real power grids with up to 80\% SSIM, and match Monte Carlo fidelity while cutting evaluation from thousands of CPU hours across the datasets to seconds per graph. 
    \item The new setup pioneers a graph-to-image prediction paradigm: learning image-like landscapes as per-node targets directly from graph topology. To the best of our knowledge, this formulation has not been established elsewhere in the literature, and may prove useful beyond synchronization, wherever a spatial or visual description of node behavior is more informative than single scalars.
\end{itemize}


\section{Generation of datasets with basin landscapes}
In this section, we present details on how to generate a dataset suitable for the novel supervised ML task.
We begin by introducing the underlying dynamical system (the second-order Kuramoto oscillator) and explain the methodology for generating basin stability landscapes via perturbations (see \Cref{sec:Basin_landscapes_of_dynamical_systems}). We then outline how these basin landscapes are converted into heatmaps that are regarded as prediction targets for supervised ML (see Figure~\ref{fig:landscape_intro}\,(b)). Finally, we summarize the key statistical properties of our dataset (see \Cref{sec:Properties_of_the_dataset}).

\subsection{Basin landscapes of dynamical systems}
\label{sec:Basin_landscapes_of_dynamical_systems}
The dynamical systems studied in this work are coupled networks of Kuramoto oscillators. We model each node of these networks as a paradigmatic second-order Kuramoto model \cite{kuramotoChemicalOscillationsWaves1984,kuramotoSelfentrainmentPopulationCoupled1975}:

\begin{equation} 
    \ddot{\phi_i} = P_i - \alpha \dot{\phi_i} - \sum_{j=1}^n K A_{ij} \sin(\phi_i - \phi_j), \label{eqKuramoto}
\end{equation}

where $\phi_i$ is the phase angle at node $i$, $\dot{\phi}_i$, and $\ddot{\phi}_i$ are its first and second time derivatives, respectively. The network topology is encoded in the adjacency matrix $\mathbf{A}$. For all ensembles we fix the damping coefficient to $\alpha = 0.1$ and assume homogeneous coupling $K = 9$.  The only node-dependent parameter is the injected power $P_i \in \{-1, +1\}$, which is regarded as the node feature in our ML settings. All dynamical simulations are conducted with respect to a reference frequency. Negative frequencies indicate that the frequency is below the reference frequency, e.g., 50~Hz in the case of the European power grid.

To perform the dynamical simulations, the statically stable state is perturbed using perturbations uniformly sampled in the phase-frequency space $(\phi, \dot{\phi}) \in [-\pi, \pi] \times [-15, 15]$. We apply 10,000 independent perturbations per node, with the results visualized as basin landscapes (see Figure~\ref{fig:landscape_intro}\,(b)). Each point represents the outcome of one dynamical simulation, and the plot shows the maximum absolute value of the final frequency at all nodes. Darker points (small values of \textit{max final} $|\dot{\phi}|$ ) indicate stable outcomes, whereas lighter points indicate unstable.

Since real applications usually care about whether a given combination of parameter configurations leads to stable or unstable states, rather than the exact value of $|\dot{\phi}|$, we classify simulation outcomes (i.e., points) as either stable or unstable. Following prior work \citep{nauckPredictingBasinStability2022}, we adopt the threshold $|\dot{\phi}| \leq 0.1$, considering configurations below this value to be synchronized (dynamically stable).

On this basis, the landscape of each node is represented by 10,000 points, each point being labeled as either stable or unstable. The overall stability probability, known as single-node basin stability (SNBS) \cite{menckHowBasinStability2013}, that a node returns to an overall stable state is defined as the ratio of stable points to the total number of points (i.e., the total number of perturbations applied). For the example in  Figure~\ref{fig:landscape_intro}\,(b), this SNBS value equals the number of dark-purple points divided by the total of 10,000 perturbations applied to the node. Unlike previous univariate node-level regression-based studies~\citep{nauckPredictingBasinStability2022,nauckDynamicStabilityAnalysis2022,nauckDynamicStabilityAssessment2023}, which predict only this scalar SNBS value for each node, we aim to reconstruct the full landscape directly from network topology. 


Directly feeding raw basin landscapes as labels into a learning algorithm is not feasible. Each node is perturbed $10,000$ times at randomly chosen phases (i.e., frequency pairs), resulting in a point cloud that differs from node to node in both location and density. Without a common ``input shape'', standard regression losses or neural network decoders cannot be applied.

Let a single perturbation be denoted by
$\boldsymbol{\delta}=(\phi,\dot{\phi})\in[-\pi,\pi]\times[-15,15]$,
and let the set of all perturbations applied to one node be
$\mathcal{D}=\{\boldsymbol{\delta}_1,\dots,\boldsymbol{\delta}_{10,000}\}.$
To address this, we down-sample each landscape onto a regular $20\times20$ grid. $20\times20$ grids have the highest practical resolution with acceptable sampling noise. Other resolutions are investigated in \Cref{app_grid_resolution}.

Denote by $\mathcal{C}_{i,m,n}\subset\mathcal{D}$ the subset of perturbations that fall into grid cell $(m,n)$ for the designated node $i$, and let $q_{i,m,n}=|\mathcal{C}_{i,m,n}|$.
We define the empirical \emph{stability probability} in that cell as

\begin{equation}
\begin{aligned}
\text{LBS}_{i,m,n}
&=
\frac{
\#\bigl\{
\boldsymbol{\delta}\in\mathcal{C}_{i,m,n}
:\,
\max_{j\in\mathcal V}
\bigl|\dot\phi^{\mathrm{final}}_j(\boldsymbol{\delta})\bigr|
\le 0.1
\bigr\}
}{
q_{i,m,n}
}
\\
&\in [0,1].
\end{aligned}
\label{eq:computation_heatmap}
\end{equation}

Here, $\dot\phi^{\,\mathrm{final}}_j(\boldsymbol{\delta})$ denotes the final frequency of node $j$ after applying perturbation $\boldsymbol{\delta}$. Where we use the same threshold as in prior work.
Cells with $\text{LBS}_{i,m,n}=1$ are fully stable and cells with $\text{LBS}_{i,m,n}=0$ are fully unstable.
Stacking the $20\times20$ values yields $\mathbf L^{(i)}\in[0,1]^{20\times20}$, the per-node heatmap label.  While the individual error of $\text{LBS}_{i,m,n}$ is quite large, the overall landscapes that emerge are quite robust, see \Cref{fig:basin_landscape_heatmaps_examples}.

Let $\mathcal N_i=\sum_{m,n}q_{i,m,n}$ be the number of trajectories for node $i$ and define $w_{i,m,n}=q_{i,m,n}/\mathcal N_i$.
By construction,
\begin{equation}
\text{SNBS}_i(\mathcal G) \;=\; \sum_{m,n} w_{i,m,n}\,\text{LBS}_{i,m,n},
\end{equation}
for $\mathcal N_i=10{,}000$ in our dataset. This makes the landscape strictly richer than the classical scalar while remaining backward compatible. \Cref{fig:basin_landscape_heatmaps_examples} contrasts raw point clouds with their corresponding downsampled heatmaps. As shown in \Cref{thm:mse_bounds_snbs}, reconstructing such a heatmap keeps the previous SNBS prediction error under control.

\begin{figure*} 
    \centering
    \vspace{-.2cm}
    \includegraphics[width=.9\textwidth]{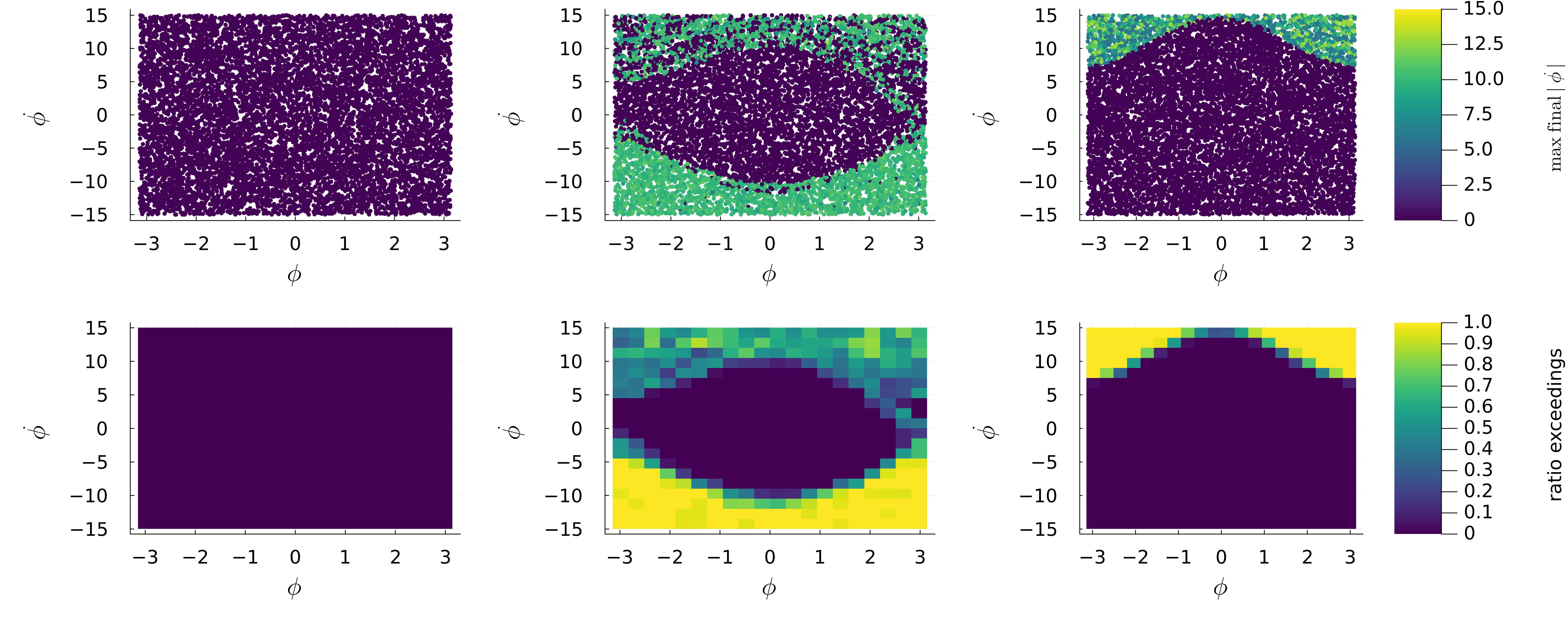}        
    \caption{Examples of basin landscapes, original MC samples (top) and derived Basin Landscape with 20x20 grid cells (bottom).} \label{fig:basin_landscape_heatmaps_examples}
\end{figure*}

\subsection{Properties of the dataset}
\label{sec:Properties_of_the_dataset}
The dataset consists of two ensembles: dataset20 and dataset100. Each sample is a triple $(\mathcal{G}, i, \mathbf L^{(i)})$, where $\mathcal{G}=(\mathcal{V}, \mathcal{E}, \mathbf X)$ is an undirected graph, $i\in\mathcal V$ is the designated node, and $\mathbf L^{(i)}\in[0,1]^{20\times20}$ is its basin‑landscape heatmap label. $X_i = P_i\!\in\!\{+1,-1\}$ denotes the node feature, indicating power injection (see \Cref{eqKuramoto}). Each ensemble is divided into training, validation, and test sets in a 70:15:15 ratio. Basic information about the datasets is summarized in \Cref{tab:dataset_properties}. The histograms for the downstream task SNBS are shown in \Cref{fig:histograms_dataset_properties} in \Cref{sec:further_properties_datasets}. Notably, for dataset100, there are more nodes that remain stable throughout. This difference in distribution poses a challenge for the out-of-distribution generalization task.

\begin{table}[t]
  \centering
  \caption{Dataset statistics. SNBS is mean over all nodes. \emph{Syn} denotes synthetic topologies and \emph{Real} denotes real-world topologies.}
  \small
  \setlength{\tabcolsep}{3pt} 
  \sisetup{
    table-number-alignment = center,
    group-digits           = integer,
    group-separator        = {,},
    detect-weight          = true,
    detect-family          = true
  }
  \begin{tabularx}{\linewidth}{@{}
      >{\centering\arraybackslash}p{0.06\linewidth} 
      >{\raggedright\arraybackslash}X               
      S[table-format=3.0, table-column-width=\dimexpr0.12\linewidth\relax] 
      S[table-format=7.0, table-column-width=\dimexpr0.30\linewidth\relax] 
      S[table-format=5.0, table-column-width=\dimexpr0.12\linewidth\relax] 
      S[table-format=1.4, table-column-width=\dimexpr0.12\linewidth\relax] 
    @{}}
    \toprule
    \textbf{Type} & \textbf{Name} & \textbf{\#Nodes} & \textbf{\#Edges} & \textbf{\#Graphs} & $\overline{\text{SNBS}}$ \\
    \midrule
    \multirow{2}{*}{\rotatebox{90}{Syn}}
      & dataset20  & 20  & 538188   & 10000 & 0.8374 \\
      & dataset100 & 100 & 2857882  & 10000 & 0.8735 \\
    \midrule
    \multirow{4}{*}{\rotatebox{90}{Real}}
      & Germany & 438 & 1324 & 1 & 0.9045 \\
      & France  & 146 & 446  & 1 & 0.8757 \\
      & GB      & 120 & 165  & 1 & 0.8368 \\
      & Spain   & 98  & 350  & 1 & 0.9288 \\
    \bottomrule
  \end{tabularx}
  \label{tab:dataset_properties}
\end{table}

Furthermore, this dataset may help address the lack of benchmarks featuring extensive long-range dependencies, as the studied problem of dynamic stability inherently involves non-local behavior, and deeper GNNs demonstrate superior performance \citep{nauckDynamicStabilityAnalysis2022,nauckDynamicStabilityAssessment2023,nauckPredictingInstabilityComplex2024,nauckDiracBianconiGraphNeural2024,raumExplainableAIAnalyzing2025}.  Combined with the task of image prediction, the dataset offers a challenging benchmark for developing GNN architectures capable of handling long-range dependencies. This is particularly relevant for applications such as power grids, where long-range interactions play a critical role \citep{ringsquandlPowerRelationalInductive2021}.

\begin{figure*}
    \centering 
    \includegraphics[width=.9\linewidth]{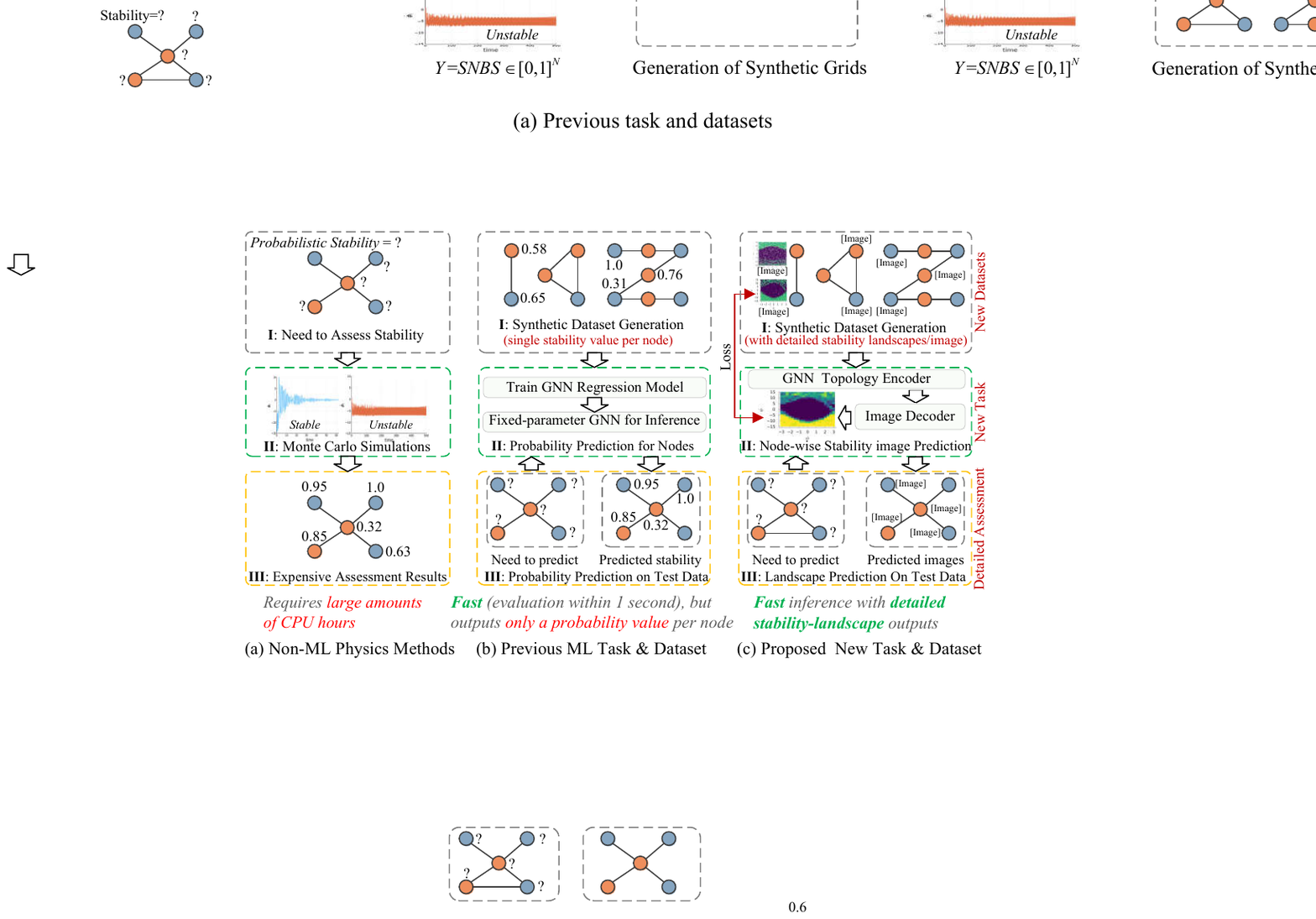}
    \caption{Comparison of stability‐assessment workflows.
(a) Traditional physics‐based methods rely on costly Monte Carlo simulations to evaluate single‐node basin stability, requiring large amounts of CPU hours.
(b) Prior ML approaches~\citep{nauckPredictingBasinStability2022,nauckDynamicStabilityAnalysis2022,nauckDynamicStabilityAssessment2023,zhuNetworkMeasureEnrichedGNNs2026} replace simulation with a trained GNN that predicts a probabilistic stability value per node in seconds, but sacrifices spatial detail.
(c) In this work, we introduce a new dataset of full basin‐landscape heatmaps and formulate a novel ML task: a GNN topology encoder plus image decoder jointly predict detailed, node‐wise stability landscapes in one fast, end‑to‑end pass.}
    \label{fig:model_architecture}
\end{figure*}

\section{Architecture: Encoder-decoder design and optimization}
We formulate basin landscape prediction as a supervised mapping:
\begin{equation}
f_\theta:\; (\mathcal G,i)\ \longrightarrow\ \mathbf L^{(i)}\in[0,1]^{20\times20},
\label{eq:architecture_supervised_mapping}
\end{equation}
where each graph $\mathcal G=(\mathcal V,\mathcal E,\mathbf X)$ carries one binary node feature $X_i = P_i\in\{-1,1\}$ and $\mathbf L^{(i)}$ is the heatmap of a designated node.
Figure~\ref{fig:model_architecture} (c) illustrates our architecture, which places the model within the broader workflow of stability assessment.

\subsection{Topology encoder and image decoder}
\paragraph{Topology encoder for embedding graph structural information.} The encoder $g_{\phi}$ is a message–passing graph neural network that converts the irregular topology into latent node embeddings 
$\mathbf Z=g_{\phi}(\mathcal G)\in\mathbb R^{|\mathcal V|\times d}$.
Previous studies~\citep{raumExplainableAIAnalyzing2025,nauckDynamicStabilityAnalysis2022,nauckDynamicStabilityAssessment2023,nauckPredictingInstabilityComplex2024,nauckDiracBianconiGraphNeural2024} suggest that SNBS is not purely a local property but relies on long-range dependencies. The importance of such dependencies has also been independently observed in other power grid applications \citep{ringsquandlPowerRelationalInductive2021}. Given this evidence, GNNs with long-range capabilities are well-suited to serve as the encoder $g_{\phi}$.
In this study, we utilize Topology Adaptive Graph convolution (TAG)~\citep{duTopologyAdaptiveGraph2017} and Dirac–Bianconi GNN (DBGNN)~\citep{nauckDiracBianconiGraphNeural2024} convolution. TAG uses learnable polynomial filters to aggregate information from up to $K=3$ hops per layer. DBGNN incorporates multiple micro-propagation steps within each layer, allowing the model to efficiently capture multi-hop information in a single layer.

\paragraph{Decoder for reconstructing basin landscape.} 
For the chosen node we extract its embedding $z_i\in\mathbb R^{d}$ and pass it to a lightweight MLP
$h_{\psi}:\mathbb R^{d}\to\mathbb R^{400}$ or a convolutional neural network (CNN) decoder.
The output is reshaped into a single-channel $20\times20$ image.
Although advanced generative decoders, such as diffusion models and variational autoencoders, present promising avenues for future exploration, in this initial investigation we employ less complex per‑node MLPs and CNNs to maintain model simplicity and support clear evaluation on our newly introduced task and dataset.

\subsection{Learning objective}
\label{sec:objective}

Ground-truth heatmaps contain continuous stability probabilities, and therefore training employs the mean-squared error (MSE):
\begin{equation}
\label{eq:supervised_mapping}
\mathcal L
=\bigl\|\,h_\psi\bigl(z_i\bigr)-\mathbf L^{(i)}\bigr\|_2^2,
\quad\text{where } z_i=\bigl(g_\phi(\mathcal G)\bigr)_i.
\end{equation}

Since SNBS equals the weighted average of the heatmap entries, this loss upper-bounds the downstream SNBS error, which is detailed in Theorem~\ref{thm:mse_bounds_snbs} including its proof.
The theorem guarantees that minimizing the weighted pixel-level MSE already keeps the SNBS error under control.

\subsection{Contingency screening}

The generated heatmaps enable a more detailed analysis for contingency screening, a key task for grid operators. By leveraging the stability landscape, it becomes straightforward to identify regions where minimal perturbations are most likely to destabilize the grid. These regions can then be prioritized by operators for further in-depth analysis to understand the underlying causes of system vulnerability—an approach that would be challenging to implement directly.

A common strategy involves analyzing the geometric features of the landscape space from which the system can recover to normal operation. This is a central topic in engineering and mechanics \cite{maGlobalGeometricStructure2019}, with ongoing research in the area \cite{zhangBasinsTentacles2021}. A typical focus is on determining the size of the smallest disturbance capable of destabilizing the system \cite{halekotteMinimalFatalShocks2020}. The probabilistic analogue—namely, the radius at which the probability of desynchronization exceeds a certain threshold—is studied as the linear size of the basin in \cite{delabaysSizeSyncBasin2017}. The radial projection of our stability landscape is exactly what they estimate directly numerically. This could be an  immediate downstream task using the predicted landscapes. From a power grid perspective, a further refinement is more interesting: Finding not just the radius, but also the precise direction of perturbation at which we first see a high likelihood of failure. In the power grid operator setting this provides a type of contingency screening: As not all possible contingencies can be studied, the basin landscape allows identifying a set of minimal perturbation regions that have a high likelihood of destabilizing the grid. These can then be singled out for further in-depth analysis, to understand why the system is prone to failure here.

To identify the most critical perturbations, we use the following procedure: For all pixels in the grid ($20 \times 20 \times N_\text{nodes}$), we select those with an exceeding ratio greater than 0.7. Among these, we identify the 20 most critical cells by lexicographic sorting—first by distance to the center, then by exceeding ratio.

\section{Predictive performance}
The predictive performance of the model is assessed in four setups. First, we evaluate the model's ability to predict heatmaps both qualitatively and quantitatively. Second, we evaluate a downstream scalar task by recovering SNBS from the heatmaps through \Cref{eq:snbs_from_lbs}. Third, zero-shot transfer to four real grid topologies, namely Germany, France, Great Britain, and Spain are analyzed. Unless noted otherwise, zero-shot uses the model trained on \textit{dataset100}. Fourth, we introduce a novel task to identify critical contingencies. A brief hyperparameter study was conducted to tune the model and training parameters. The resulting model properties are provided in \Cref{appendix_model_training_properties}.

The DBGNN uses two layers, each with 10 internal propagation steps (alternating between node-to-line and line-to-node updates), effectively capturing information from nodes up to 10 edges away. For TAG, the parameter ( $K$=3 ) is selected with 5 layers, allowing the TAG model to consider nodes within 15 edge steps. Further information on the models and the training settings is provided in \Cref{appendix_model_training_properties}.

\subsection{Evaluation: heatmap comparison}
The qualitative performance of the ML model is assessed by comparing the predicted heatmaps to the ground truth. Example heatmaps from the TAG-MLP model, trained and evaluated on dataset20, are shown in \Cref{fig:heatmaps_TAG_tr20ev20}. Additional results for TAG-MLP trained and evaluated on dataset100, as well as for the out-of-distribution (OOD) task, trained on dataset20 and evaluated on dataset100, are provided in the appendix (\Cref{sec_appendix_additional_results}). 
Furthermore, we demonstrate the models' generalization capabilities on the real-world grid topologies of France, Germany, Great Britain, and Spain, as discussed in \Cref{sec:generalization_rea_world_topologies}.

Overall, the visualizations demonstrate that the general shapes of the landscapes are often predicted accurately. However, there are instances where the predicted SNBS deviates significantly from the true labels, highlighting areas for improvement. These discrepancies can occur even when the qualitative comparison shows that the stable region's structure is correctly predicted, but the ratio of exceedances is not accurately captured. Thus, even if the predicted SNBS differs from the true value, the prediction may still be valuable as it correctly identifies the stable region.

\begin{figure*}
    \centering 
    \vspace{-.2cm}
    \includegraphics[width=.94\linewidth]{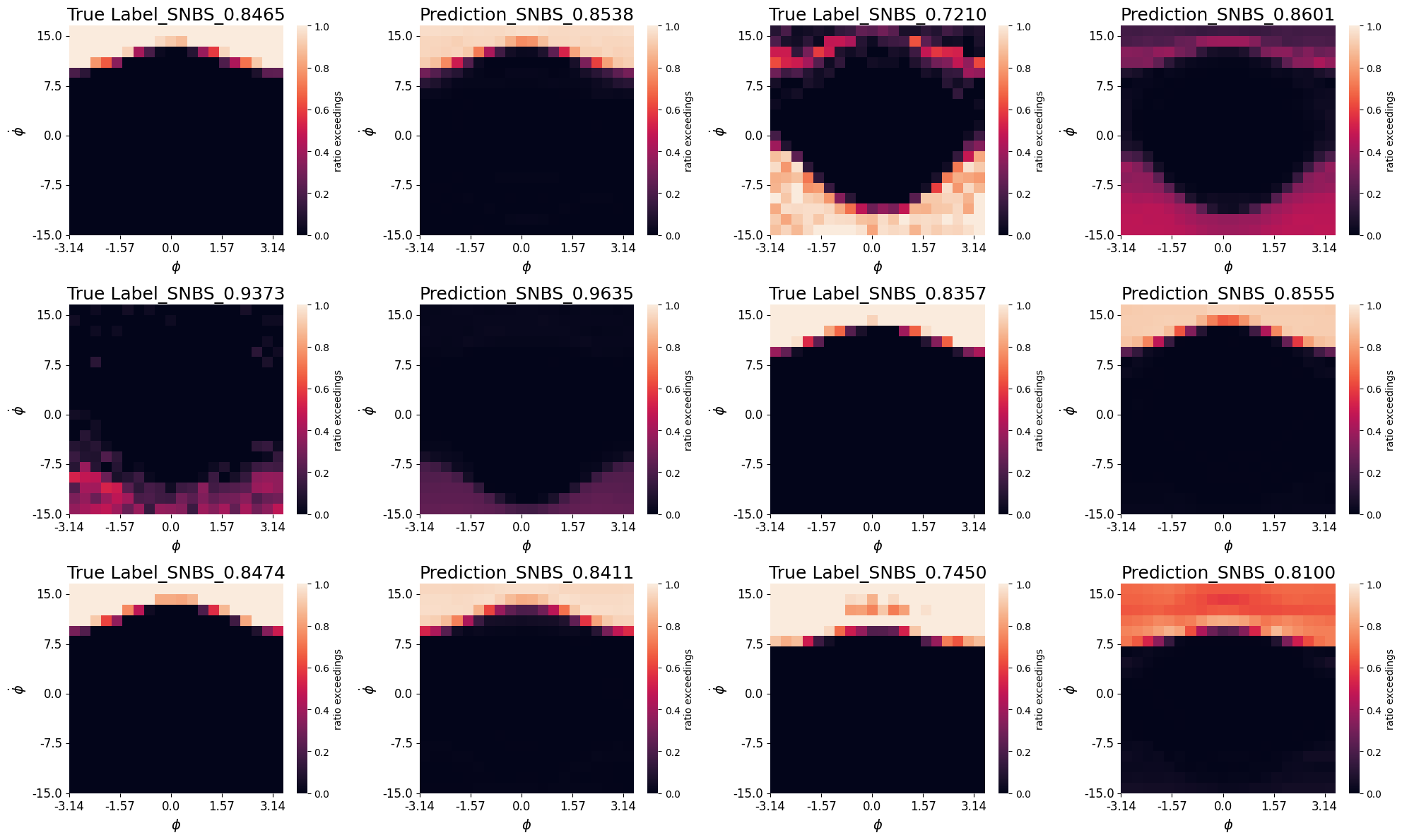}
    \caption{Comparison of true labels and predicted heatmaps for the TAG-MLP model trained and evaluated on dataset20.}
    \label{fig:heatmaps_TAG_tr20ev20}
    \vspace{-0cm}
\end{figure*}

The quantitative performance, summarized in \Cref{tab:ssim_heatmaps} using the structural similarity index measure (SSIM), demonstrates that the heatmaps can be predicted with an SSIM of up to 86.7\%. Importantly, the approach also performs well in out-of-distribution generalization, albeit with noticeably lower performance. The best performance is achieved using a CNN as image decoder. Additional metrics are provided in \Cref{sec:additional_metrics_heatmaps}.


\begin{table}[t]
\centering
\caption{Performance on predicting the heatmaps of the landscapes measured by $SSIM$ in \%.}
\label{tab:ssim_heatmaps}
\begin{tabularx}{\linewidth}{@{} >{\centering\arraybackslash}p{0.035\linewidth} >{\raggedright\arraybackslash}X *{3}{>{\centering\arraybackslash}X} @{}}
    \toprule
    & \textbf{Model}   & \multicolumn{2}{c}{\textbf{In-Distribution}} & \textbf{Out-of-Distribution} \\
    \cmidrule(r){1-2}  \cmidrule(rl){3-4} \cmidrule(l){5-5}
     &  & tr20ev20 & tr100ev100 & tr20ev100 \\
    \cmidrule(r){1-2}  \cmidrule(rl){3-4} \cmidrule(l){5-5}
    \multirow{2}{*}{\rotatebox{90}{MLP}}
      & TAG & 81.37\tiny{$\pm 1.58$} & 81.41\tiny{$\pm 0.23$} & 71.11\tiny{$\pm 1.47$} \\
      & DBGNN & 82.63 \tiny{$\pm 1.75$} & 83.62 \tiny{$\pm 0.90$} & 75.49 \tiny{$\pm 2.44$} \\
    \cmidrule(r){1-2}  \cmidrule(rl){3-4} \cmidrule(l){5-5}
    \multirow{2}{*}{\rotatebox{90}{CNN}}
      & TAG & 84.31 \tiny{$\pm 0.22$} & 83.54 \tiny{$\pm 0.10$} & 74.36 \tiny{$\pm 0.30$} \\
      & DBGNN & \textbf{86.69} \tiny{$\pm 1.91$} & \textbf{84.90} \tiny{$\pm 0.33$} & \textbf{81.03} \tiny{$\pm 1.59$} \\
    \bottomrule
\end{tabularx}
\end{table}

\subsection{Evaluation on the downstream task of recovering SNBS from predicted heatmaps}
As a second benchmark, we evaluate the downstream scalar task obtained by recovering SNBS from the predicted heatmaps.  Similar to \cite{nauckDynamicStabilityAssessment2023}, we report the mean value of the best 3 seeds out of 5 initializations and the corresponding standard deviation. The performance of the downstream task is presented in \Cref{tab:snbs_performance}. The overall performance is comparable to the results achieved when directly predicting SNBS, bypassing the intermediate step of predicting the landscapes. The slightly lower performance observed may stem from the increased complexity of accurately predicting the landscapes. Examining exemplary basin landscapes in \Cref{fig:heatmaps_TAG_tr20ev20,fig:heatmaps_TAG_tr20ev100}, we often observe symmetric structures with comparable probabilities. By symmetric structures, we refer to regions where stability is present either at low $\phi$ or high $\phi$, with only a small region in between where instabilities occur. When predicting only the probability, it suffices to approximate the size of this unstable region. However, predicting the landscapes requires not only estimating the size, but also accurately locating the unstable region. In contrast, for probability prediction, the specific location of the stable regions (whether at low or high $\phi$) is irrelevant. This added requirement of spatial precision in landscape prediction likely contributes to the observed reduction in overall performance. To conclude, the performance reduction is minimal, and the results serve as a proof of concept, highlighting the potential of using ML to predict landscapes.

\begin{table}[t]
\centering
\caption{Performance on the downstream task of predicting SNBS based on heatmap predictions measured by $R^2$ in \%. Baselines are from \cite{nauckDynamicStabilityAssessment2023,nauckDiracBianconiGraphNeural2024}.}
\label{tab:snbs_performance}
\begin{tabularx}{\linewidth}{@{} >{\centering\arraybackslash}p{0.035\linewidth} >{\raggedright\arraybackslash}X *{3}{>{\centering\arraybackslash}X} @{}}
    \toprule
    & \textbf{Model} & \multicolumn{2}{c}{\textbf{In-Distribution}} & \textbf{Out-of-Distribution} \\
    \cmidrule(r){1-2} \cmidrule(rl){3-4} \cmidrule(l){5-5}
     &  & tr20ev20 & tr100ev100 & tr20ev100 \\
    \cmidrule(r){1-2} \cmidrule(rl){3-4} \cmidrule(l){5-5}
    & \mbox{ArmaNet}    & 82.22 {\tiny$\pm$ 0.12} & 88.35 {\tiny$\pm$ 0.12} & 67.12 {\tiny$\pm$ 0.80} \\
    & \mbox{GCNNet}     & 70.74 {\tiny$\pm$ 0.15} & 75.19 {\tiny$\pm$ 0.14} & 58.24 {\tiny$\pm$ 0.47} \\
    & \mbox{TAGNet}     & 82.50 {\tiny$\pm$ 0.36} & 88.32 {\tiny$\pm$ 0.10} & 66.32 {\tiny$\pm$ 0.74} \\ 
    & \mbox{DBGNN}      & \textbf{85.68} {\tiny$\pm$ 0.10} & \textbf{90.08} {\tiny$\pm$ 0.02} & \textbf{73.73} {\tiny$\pm$ 0.07} \\
    \cmidrule(r){1-2} \cmidrule(rl){3-4} \cmidrule(l){5-5}
      & TAG & 82.61 \tiny{$\pm 0.46$} & 86.60 \tiny{$\pm 0.50$} & 60.48 \tiny{$\pm 1.39$} \\
      & DBGNN & 83.99 \tiny{$\pm 0.04$} & 89.16 \tiny{$\pm 0.07$} & 70.66 \tiny{$\pm 1.83$} \\
    \cmidrule(r){1-2} \cmidrule(rl){3-4} \cmidrule(l){5-5}
    \multirow{2}{*}{\rotatebox{90}{CNN}}
      & TAG & 80.86 \tiny{$\pm 0.84$} & 86.06 \tiny{$\pm 0.16$} & 56.25 \tiny{$\pm 0.75$} \\
      & DBGNN & 83.96 \tiny{$\pm 0.09$} & 88.75 \tiny{$\pm 0.10$} & 70.52 \tiny{$\pm 1.64$} \\
    \bottomrule
\end{tabularx}
\end{table}

\subsection{Generalization to real-world topologies}
\label{sec:generalization_rea_world_topologies}
To further evaluate generalization, we assess model performance on the real-world grid topologies of France, Germany, Great Britain, and Spain. As shown in \Cref{tab:ev_performance_real_topologies_ssim}, the ML models generally adapt well to these realistic networks. TAG models, in particular, exhibits robust generalization to grids with hundreds of nodes, despite being trained solely on small synthetic graphs, suggesting strong scalability. The performances on the down-stream task are provided in \Cref{tab:ev_real_topologies_downstream}. 


\begin{table}[t]
\centering
\caption{Out-of-distribution performance on predicting the heatmaps of real topologies measured by $SSIM$ in \%.}
\label{tab:ev_performance_real_topologies_ssim}
\setlength{\tabcolsep}{0pt} 
\begin{tabular*}{\linewidth}{@{\extracolsep{4pt}} >{\centering\arraybackslash}p{0.035\linewidth} >{\raggedright\arraybackslash}p{0.16\linewidth} *{4}{>{\centering\arraybackslash}p{0.17\linewidth}}}
    \toprule
    & \textbf{Model} & \textbf{Germany} & \textbf{France} & \textbf{GB} & \textbf{Spain} \\
    \cmidrule(r){1-2}  \cmidrule(rl){3-6}
    \multirow{2}{*}{\rotatebox{90}{MLP}} &
    TAG & 80.79\tiny{$\pm 0.24$} & 76.76\tiny{$\pm 0.69$} & 81.84\tiny{$\pm 0.30$} & 75.93\tiny{$\pm 1.23$} \\
    & DBGNN & 54.77\tiny{$\pm 4.05$} & 53.41\tiny{$\pm 2.29$} & 54.22\tiny{$\pm 1.80$} & 60.23\tiny{$\pm 6.76$} \\
    \cmidrule(r){1-2}  \cmidrule(rl){3-6}
    \multirow{2}{*}{\rotatebox{90}{CNN}} 
    & TAG & \textbf{82.87}\tiny{$\pm 0.26$} & \textbf{79.26}\tiny{$\pm 0.45$} & \textbf{85.38}\tiny{$\pm 0.83$} & \textbf{79.45}\tiny{$\pm 0.09$} \\
    & DBGNN & 54.77\tiny{$\pm 4.05$} & 53.41\tiny{$\pm 2.29$} & 54.22\tiny{$\pm 1.80$} & 60.23\tiny{$\pm 6.76$} \\
    \bottomrule
\end{tabular*}
\vspace{-.3cm}
\end{table}

\subsection{Identification of critical contingencies}
Predicting heatmaps enables fine‑grained contingency screening by identifying the most critical perturbations. We use the predicted heatmaps as the basis for a novel downstream task: ranking and detecting the most critical contingencies. \Cref{fig:contingencies_top20critical_node17} highlights the most critical cells using lime circles. The screening pinpoints the smallest yet most critical perturbations, demonstrating the potential of utilizing ML to preselect contingencies for further analysis. 

\begin{figure}[]
    \vspace{-.4cm}
    \center
    \includegraphics[width=.9\linewidth]{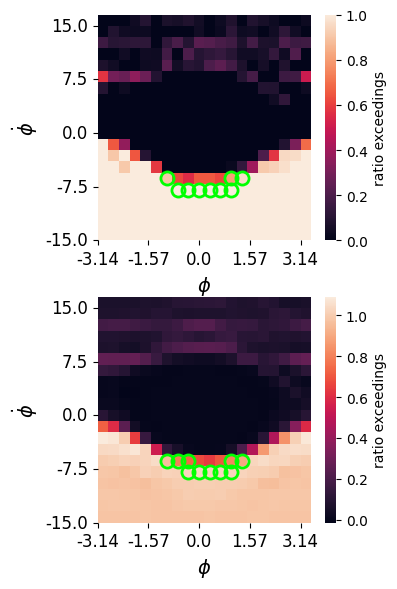}
    \caption{Visualization of critical contingencies (true labels on the top and predictions on the bottom) of a node from dataset20. The predictions are from TAG-MLP.} 
    \label{fig:contingencies_top20critical_node17}
    \vspace{-.4cm}
\end{figure}

Quantitative assessment is challenging because we aim to predict individual cells (pixels) rather than broader regions. For instance, predicting a neighboring cell may be acceptable if it has similar stability, but it can be problematic if the neighboring cell is actually stable. Therefore, we use two metrics: i) a strict evaluation procedure that only considers a prediction correct if the exact same cell is identified, ii) a relaxed evaluation metric for which we compute an accuracy by evaluating whether the 20 most critical cells according to the ground truth labels fall within the 30 most critical cells according to the model predictions. For the latter, the performances are provided in \Cref{app:contingencie_downstream}.

For the strict metric, predictive in-distribution performance is reported in \Cref{tab:iou_performance}, using the intersection over union metric, which quantifies the overlap between the 20 most critical cells in the predicted and true heatmaps. The out-of-distribution performance is reported in \Cref{tab:iou_performance_real_grids}.

\begin{table}[t]
\centering
\caption{
Performance on the downstream task of identifying critical contingencies, measured by intersection over union  between predicted and true critical cells.}
\label{tab:iou_performance}

\begin{tabularx}{\linewidth}{@{} >{\centering\arraybackslash}p{0.035\linewidth} >{\raggedright\arraybackslash}X *{3}{>{\centering\arraybackslash}X} @{}}
    \toprule
    & \textbf{Model}   & \multicolumn{2}{c}{\textbf{In-Distribution}} & \textbf{Out-of-Distribution} \\
    \cmidrule(r){1-2}  \cmidrule(rl){3-4} \cmidrule(l){5-5}
     &  & tr20ev20 & tr100ev100 & tr20ev100 \\
    \cmidrule(r){1-2}  \cmidrule(rl){3-4} \cmidrule(l){5-5}
    \multirow{2}{*}{\rotatebox{90}{MLP}}
    &
        TAG & 0.58\tiny{$\pm 0.01$} & 0.41\tiny{$\pm 0.01$} & 0.32\tiny{$\pm 0.02$} \\
        & DBGNN & 0.58\tiny{$\pm 0.02$} & 0.50\tiny{$\pm 0.00$} & 0.38\tiny{$\pm 0.03$} \\

            \cmidrule(r){1-2}  \cmidrule(rl){3-4} \cmidrule(l){5-5}
    \multirow{2}{*}{\rotatebox{90}{CNN}}
&
        TAG &  0.58\tiny{$\pm 0.01$} & 0.47\tiny{$\pm 0.02$} & 0.36\tiny{$\pm 0.01$} \\
        & DBGNN & \textbf{0.60}\tiny{$\pm 0.03$} & \textbf{0.52}\tiny{$\pm 0.00$} & \textbf{0.40}\tiny{$\pm 0.04$} \\
    \bottomrule
\end{tabularx}
\end{table}

\section{Limitations}

Our work is subject to several important limitations that should be considered when interpreting the results. First, we focus on homogeneous Kuramoto oscillator networks with uniform coupling and parameters. While these systems are of theoretical interest and provide a foundation for understanding synchronization dynamics, they do not directly represent real-world power grids or other practical applications. The underlying ensembles are inspired by power grid topologies but incorporate simplifying assumptions that limit direct applicability. Nevertheless, the synchronization problem remains fundamentally relevant, and our landscape-based analysis has broader significance for the physics community.

Second, our pipeline is evaluated on a single oscillator model with homogeneous parameters. Extending the analysis to other dynamical systems and heterogeneous network properties would strengthen the generalizability of our approach. However, since all oscillators share a common underlying structure (topology, parameters, and line properties), scaling to additional dynamical parameters should pose no fundamental challenge to our machine learning framework. This is evidenced by recent work that successfully incorporates heterogeneous node and edge features with different dynamical actors while predicting stability properties in more realistic power grid models \cite{nauckPredictingFaultRideThroughProbability2024}.

Third, some of the real power grids analyzed in this work feature small numbers of nodes, which limits the statistical power for evaluating model generalization through direct benchmarking across different models. Nevertheless, demonstrating successful generalization to such constrained systems provides compelling evidence that machine learning models can learn underlying physical principles across different networks. Additionally, more sophisticated machine learning decoding architectures could further enhance predictive performance beyond the methods presented here.

Despite these limitations, we believe this work represents an important contribution to the field. The primary bottleneck of our approach is data scarcity—to our knowledge, no publicly available datasets currently exist for validating this pipeline. We hope this work serves as a proof of concept, encouraging the research community to invest in generating large-scale datasets across related domains.

\section{Conclusion}
We introduce graph‑to‑field prediction of per‑node stability landscapes from topology, replacing a single SNBS score with a field that localizes where failures arise. We release two large‑scale datasets with landscape labels at 20 and 100 nodes that support in‑distribution testing and size‑shift evaluation. Our models with GNN encoders match the fidelity of Monte Carlo while reducing evaluation from thousands of CPU hours across the datasets to seconds per graph. They generalize without retraining to four real power grid topologies. These pieces could establish a practical benchmark and a template for learning stability landscapes on graphs, and they invite landscape‑level prediction across other domains. We expect the framework to transfer to other dynamical systems (biochemical, neuroscience, and infrastructure) given sufficient high‑resolution training data.

\section*{Code and data}

All code for dataset generation, model training, and analysis is made publicly available on GitHub \cite{nauckPIKICoNePapercompanion_predictingsnbslandscapes} and Zenodo \cite{nauckPaperCompanionPredicting2025}. Ready-to-use datasets for machine learning training will be hosted on HuggingFace \cite{PIKICoNelandscapeStability_LandscapesDatasets_HuggingFace}. 


The repository includes tools for training, evaluation, hyperparameter optimization, and visualization, with instructions for environment setup and experiment replication. The raw simulation data, from which landscapes at arbitrary resolution can be reconstructed, is available at \cite{nauckDynamicalSimulationsOscillator2025}.

\clearpage
\newpage

\section*{Impact Statement}
This paper presents work whose goal is to advance the field of Machine
Learning. There are many potential societal consequences of our work, none
which we feel must be specifically highlighted here.





\section*{Acknowledgements}
All authors gratefully acknowledge Land Brandenburg for supporting this project by providing resources on the high-performance computer system at the Potsdam Institute for Climate Impact Research. The work was in part supported by BMBF Grant 03SF0766, and BMWK grant 03EI1092A. Christian Nauck would like to thank Professor Jörg Raisch for supervising his PhD. We especially want to thank the reviewers for their valuable feedback and ideas, which significantly improved the quality of this paper. 

\bibliography{icone-pik}
\bibliographystyle{icml2026}

\newpage
\appendix






\section{Technical Appendices and Supplementary Material}
The appendix is organized into four sections. The first section provides additional information on the datasets. The second section presents the theoretical proof for the theorem concerning the relationship between the heatmaps and SNBS. Next, details regarding the models and training procedures are described. Finally, the appendix concludes with supplementary results.







\subsection{Further Properties of the Dataset}
\label{sec:further_properties_datasets}
\Cref{fig:histograms_dataset_properties} shows the histograms for the downstream task target SNBS. In comparison to dataset20, the ensemble with graphs of size 100 contains more nodes that remain stable under all perturbations. This difference in distribution poses a challenge for machine learning models to generalize across ensembles of varying sizes. The out-of-distribution challenge is especially challenging for the real-world topologies, since they show a different SNBS distribution.

\begin{figure*}[h]
    \centering

    \includegraphics[width=0.9\linewidth]{pics/hist_snbs_1row.png} \\
    \includegraphics[width=.9\linewidth]{pics/hist_snbs_real_grids_1row.png}
    
    \caption{Histograms of the downstream task SNBS. The top panel corresponds to the synthetic topologies: dataset20, dataset100 and Texas. The bottom represents the real-world topologies. Figures from \cite{nauckPredictingInstabilityComplex2024}}
    \label{fig:histograms_dataset_properties}
\end{figure*}

\paragraph{Justification of used threshold for categorizing trajectories as stable}
To categorize trajectories as stable or unstable a threshold of $0.1$ is used for the maximum final frequency deviation. There is essentially no sensitivity to this threshold. Asymptotically the system either synchronizes or runs towards a limit cycle with phase velocity around 10/s. Earlier studies of re-synchronization explored sophisticated heuristics to robustly identify the cases where we have re-synchronization, but experience shows that the simple threshold 0.1 is sufficient and achieves identical probabilistic results in the setting explored here. We present a histogram of the maximum final frequency deviation and indicated the 0.1 threshold in \Cref{fig:histogram_mfd_threshold_sensitivity}. Almost no configurations lie just above this cutoff. Because the same threshold was also used as an early-stopping criterion for synchronization, values on the synchronized side of the boundary cannot be resolved in finer detail. This is, however, consistent with current state-of-the-art practice in oscillator-network studies.

\begin{figure}
    \centering
    \includegraphics[width=\linewidth]{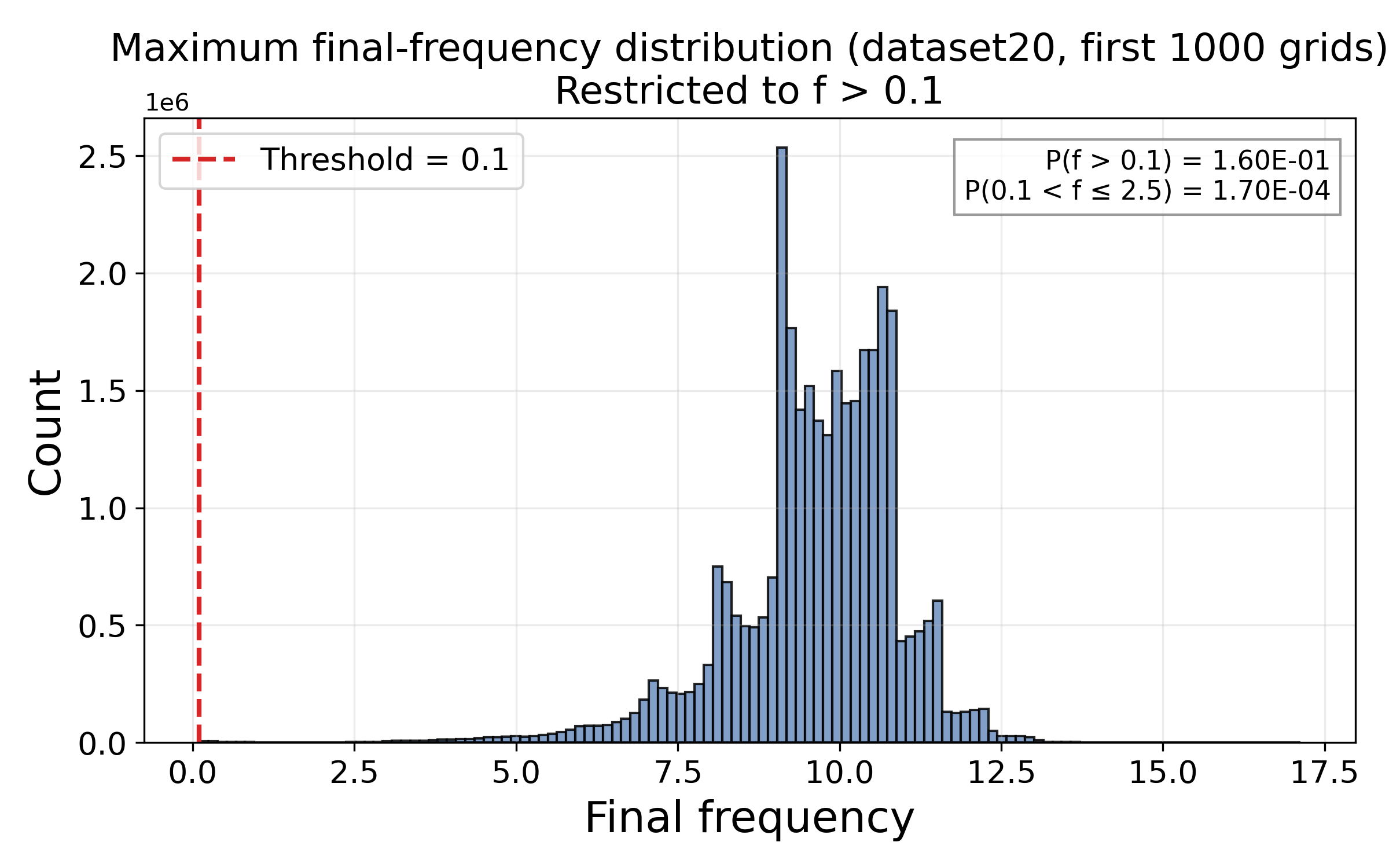}
    \caption{Distribution of final frequencies for the first 100 grids of dataset20. The linear histogram is limited to samples with$f>0.1$. The legend probabilities show the fractions of samples with $f>0.1$ and $0.1<f<2.5$, confirming that these results do not affect the reported statistics. The red dashed vertical line marks the synchronization threshold at $f = 0.1$.}
    \label{fig:histogram_mfd_threshold_sensitivity}
\end{figure}

\subsection{Robustness of simulations on the cell level}

Each node receives a different set of random perturbations, resulting in varying perturbation distributions across cells (including variation in the number of perturbations per cell). While our dataset contains several different perturbation distributions, we did not systematically study distinct sets for identical nodes. The uncertainties per cell are substantial due to the relatively sparse sampling—on average, only 25 perturbations per cell for 20×20 image resolution. As an illustration, in a Bernoulli setting with $p=0.5$ and  $0n=25$, the empirical proportion has standard deviation $\sqrt{p(1-p)/n} = 0.1$. In particular, cells with p close to 0 or 1 will have almost no dependence on the within-cell distribution. The location of the boundary between stable and unstable regions will not change meaningfully. Overall, when predicting across a large set of cells spanning multiple nodes and grids, the inherent sparsity can be viewed as noise, which the machine learning models appear to handle effectively.

\subsection{Details on dataset generation and numerical simulations}
The synthetic topologies are generated using the random-growth algorithm of \cite{schultzRandomGrowthModel2014}, which is designed to reproduce key topological characteristics of power grids, including a realistic degree distribution. We use the same parameter setting as in \cite{nitzbonDecipheringImprintTopology2017,nauckPredictingBasinStability2022}: initial number of nodes $n_0 = 1$, $p = \tfrac{1}{5}$, $q = \tfrac{3}{10}$, $r = \tfrac{1}{3}$, and $s = \tfrac{1}{10}$. Here, $p$ and $q$ determine the probabilities of adding new transmission lines, $s$ is the probability of splitting an existing line, and $r$ controls the creation of redundant paths.

To solve the differential equations, the Julia package DifferentialEquations.jl \cite{rackauckasDifferentialEquationsjlPerformantFeatureRich2017} and the solver Tsit5 \cite{tsitourasRungeKuttaPairs2011} is used. We run the simulations for 500 seconds, with early stopping in case of convergence. The synchronous static state is computed by solving the nonlinear nodal balance equations, where node phases are mapped to complex phasors and network flows are derived from the Laplacian. The root finder nlsolve  from the NLSolve.jl package starts from a pseudoinverse-based initial guess.

\subsection{Bounding SNBS error by pixel MSE}
\label{proof:mse_bounds_snbs}

\begin{theorem}[Pixel MSE upper-bounds SNBS error]
\label{thm:mse_bounds_snbs}
Fix a node $i$.
For each grid cell $(m,n)$, let $q_{i,m,n}\ge 0$ be the number of trajectories in that cell and set
$\mathcal N_i=\sum_{m,n}q_{i,m,n}$ and $w_{i,m,n}=q_{i,m,n}/\mathcal N_i$.
Let $\text{LBS}_{i,m,n},\widehat{\text{LBS}}_{i,m,n}\in[0,1]$ be the true and predicted stability probabilities. Define
\begin{align}
\mathrm{SNBS}_i=\sum_{m,n}w_{i,m,n}\,\text{LBS}_{i,m,n},\\
\widehat{\mathrm{SNBS}}_i=\sum_{m,n}w_{i,m,n}\,\widehat{\text{LBS}}_{i,m,n}. \notag
\end{align}
Then
\begin{multline}
\bigl(\widehat{\mathrm{SNBS}}_i-\mathrm{SNBS}_i\bigr)^2
\;\le\; \\
\sum_{m,n} w_{i,m,n}\,\bigl(\widehat{\text{LBS}}_{i,m,n}-\text{LBS}_{i,m,n}\bigr)^2 .
\end{multline}

The right-hand side is exactly the weighted pixel-wise MSE employed as the loss in  \Cref{eq:supervised_mapping} when sample counts per cell are unequal.
\end{theorem}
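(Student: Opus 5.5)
The plan is to reduce the claim to Jensen's inequality (equivalently, Cauchy--Schwarz) for the probability vector $(w_{i,m,n})_{m,n}$. First we note that the weights are nonnegative and sum to one, since $\sum_{m,n} w_{i,m,n} = \sum_{m,n} q_{i,m,n}/\mathcal N_i = 1$. Here we assume $\mathcal N_i>0$, as is implicit in the definition of $w_{i,m,n}$; in our dataset $\mathcal N_i = 10{,}000$. Cells with $q_{i,m,n}=0$ simply receive weight zero and drop out of both sides, so no case split is needed.

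Next, set $e_{m,n} := \widehat{\text{LBS}}_{i,m,n}-\text{LBS}_{i,m,n}$. By linearity of the weighted sums defining $\mathrm{SNBS}_i$ and $\widehat{\mathrm{SNBS}}_i$,
\begin{equation*}
\widehat{\mathrm{SNBS}}_i-\mathrm{SNBS}_i=\sum_{m,n} w_{i,m,n}\, e_{m,n}.
\end{equation*}
The left-hand side of the theorem is therefore the square of a weighted mean of the cell errors.

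To bound it, write $w_{i,m,n} e_{m,n} = \sqrt{w_{i,m,n}}\cdot\sqrt{w_{i,m,n}}\, e_{m,n}$ and apply Cauchy--Schwarz:
\begin{equation*}
\Bigl(\sum_{m,n} w_{i,m,n} e_{m,n}\Bigr)^2 \le \Bigl(\sum_{m,n} w_{i,m,n}\Bigr)\Bigl(\sum_{m,n} w_{i,m,n} e_{m,n}^2\Bigr)=\sum_{m,n} w_{i,m,n} e_{m,n}^2 .
\end{equation*}
This is exactly the claimed inequality. Equivalently, it is Jensen's inequality for the convex function $x\mapsto x^2$ under the discrete distribution $w_{i,\cdot,\cdot}$. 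In that form the gap between the two sides is the weighted variance of the errors $e_{m,n}$, which we can mention as a remark on tightness.

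No step is a genuine obstacle; the argument is a single application of a standard inequality. The only points needing care are bookkeeping ones:
\begin{itemize}
\item The weights must be normalized, so that the factor $\sum_{m,n} w_{i,m,n}$ equals one rather than appearing as a constant.
\item The right-hand side should be identified with the loss of \Cref{eq:supervised_mapping}. It coincides with the loss when that loss is weighted by $w_{i,m,n}$. For uniform counts $q_{i,m,n}=\mathcal N_i/400$ it is the plain pixel MSE, i.e.\ the loss of \Cref{eq:supervised_mapping} divided by $400$.
\end{itemize}
The assumption $\text{LBS},\widehat{\text{LBS}}\in[0,1]$ is not needed for the inequality itself.
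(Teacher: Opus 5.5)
Your proof is correct and matches the paper's: both rewrite $\widehat{\mathrm{SNBS}}_i-\mathrm{SNBS}_i$ by linearity as the $w$-weighted sum of cell errors and then apply Jensen's inequality for $x\mapsto x^2$, and your Cauchy--Schwarz step using $\sum_{m,n} w_{i,m,n}=1$ is the same inequality. Your remark that the right-hand side coincides with the loss of \Cref{eq:supervised_mapping} only once that loss is weighted by $w_{i,m,n}$ (and equals that loss divided by $400$ for uniform counts) is more careful than the paper's own identification.
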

Theorem~\ref{thm:mse_bounds_snbs} guarantees that minimizing the weighted pixel-level MSE already keeps the SNBS error under control.  Consequently we do not need to append a second loss term such as $(\widehat{\mathrm{SNBS}}_i-\mathrm{SNBS}_i)^{2}$ to the training objective.  Omitting this extra term also removes the weighting factor that would otherwise be required to balance two losses, so optimization remains a single-objective problem with one fewer hyper-parameter to tune. Moreover, the inequality provides a quantitative error guarantee.  Even if the model makes large mistakes on a small subset of pixels, the global SNBS error cannot exceed the reported weighted-MSE.  This gives an interpretable upper bound: as long as the overall heatmap MSE is small, the classical stability score is automatically protected.

The following presents the proof of \Cref{thm:mse_bounds_snbs}.

\begin{proof}
Let 
$\varepsilon_{i,m,n}= \widehat {\text{LBS}}_{i,m,n}-\text{LBS}_{i,m,n}$ and recall that the weights $w_{i,m,n}\ge 0$ satisfy $\sum_{m,n}w_{i,m,n}=1$.
By the definitions of (weighted) SNBS we have:
\begin{multline}
   \bigl(\widehat{\mathrm{SNBS}}_i-\mathrm{SNBS}_i\bigr)^{2} \\
   = \Bigl(\sum_{m,n} w_{i,m,n}\,\widehat {\text{LBS}}_{i,m,n}-\sum_{m,n} w_{i,m,n}\,\text{LBS}_{i,m,n}\Bigr)^{2} \\
   = \Bigl(\sum_{m,n} w_{i,m,n}\,\epsilon_{i,m,n}\Bigr)^{2} \\
    \le \sum_{m,n} w_{i,m,n}\,\epsilon_{i,m,n}^{2} \quad (\text{by Jensen’s inequality}) \\
   =  \sum_{m,n} w_{i,m,n}\,\bigl(\widehat {\text{LBS}}_{i,m,n}-\text{LBS}_{i,m,n}\bigr)^{2} \\
   = \sum_{m=1}^{20}\sum_{n=1}^{20}  w_{i,m,n}\,\bigl(\widehat {\text{LBS}}_{i,m,n}-\text{LBS}_{i,m,n}\bigr)^{2},
\end{multline}
which completes the proof.
\end{proof}

\subsection{Model and training properties}
\label{appendix_model_training_properties}
The DBGNN uses two layers, each with 10 internal propagation steps (alternating between node-to-line and line-to-node updates), effectively capturing information from nodes up to 10 edges away. For TAG, the parameter ( $K$=3 ) is selected with 5 layers, allowing the TAG model to consider nodes within 15 edge steps. 
Details of the final model architecture are summarized in \Cref{tab:model_properties}, while the training properties are listed in \Cref{tab:training_properties}.  The CNN decoder replaces the MLP head with a three-layer transposed-convolutional decoder
  (channels  $64\!\rightarrow\!32\!\rightarrow\!16$, single output channel, no dropout), while the GNN encoders remain unchanged. Hyperparameter optimization was performed using Optuna \citep{akibaOptunaNextgenerationHyperparameter2019}. Training was conducted on a local HPC equipped with an Nvidia A100 GPU (80GB memory). For DBGNN-MLP, training on dataset20 with 5 consecutive seeds required approximately 21 hours and 35 minutes, while training on dataset100 took about 6 days. For TAG-MLP, training on dataset20 took less than 12 hours, and on dataset100 less than 2 days.

\begin{table*}[h!]
    \centering
    \caption{Summary of model properties.}
    \label{tab:model_properties}
    \begin{tabularx}{\linewidth}{XXX}
    \toprule
    \textbf{Property} & \textbf{TAG-MLP} & \textbf{DBGNN-MLP} \\
    \midrule
    Number of GNN layers & 5 & 2\\ 
    Hidden dimension of GNN & 1751& 512\\
    Output dimension of GNN & 671& 512\\
    Hidden dimension of MLP & 654& 512\\
    Output dimension of MLP &512 & 512\\
    Dropout of GNN layer & $\approx 0.148$& 0.04\\
    Dropout of MLP layer & 0.26& 0.02\\
    \bottomrule
    \end{tabularx}
\end{table*}

\begin{table*}[t]
    \centering
    \caption{Overview of training properties. MSE denotes mean squared error.
    The TAG--CNN and DBGNN--CNN decoders (trained on \textit{dataset20} and \textit{dataset100}) use
    the hyperparameters shown in the last column.}
    \label{tab:training_properties}
    \begin{tabularx}{\textwidth}{l *{5}{>{\centering\arraybackslash}X}}
        \toprule
        \multicolumn{1}{c}{Parameters} &
        \multicolumn{1}{c}{\shortstack{TAG-MLP\\(tr20)}} &
        \multicolumn{1}{c}{\shortstack{TAG-MLP\\(tr100)}} &
        \multicolumn{1}{c}{\shortstack{DBGNN-MLP\\(tr20)}} &
        \multicolumn{1}{c}{\shortstack{DBGNN-MLP\\(tr100)}} &
        \multicolumn{1}{c}{\shortstack{CNN\\decoders}} \\
        \midrule
        Learning rate 
        & $\approx 7.77\text{e-}05$
        & $\approx 6.77\text{e-}05$
        & $\approx 6.77\text{e-}06$
        & $\approx 6.77\text{e-}05$
        & $\approx 6.77\text{e-}05$ \\
        Number of epochs 
        & 250 & 250 & 1000 & 1000 & 250 \\
        Loss function 
        & MSE & MSE & MSE & MSE & MSE \\
        Training batch size 
        & 32 & 32 & 32 & 32 & 32 \\
        \bottomrule
    \end{tabularx}
\end{table*}

\Cref{tab:inference_times} presents the inference times,  demonstrating the huge advantage over running Monte Carlo simulations requiring several hours per grid.

\begin{table*}[h!]
    \centering
    \caption{Inference times using the runtime for evaluating the full test sets (1,500 grids). This shows that 1,500 grids can be analyzed in under 2 seconds.}
    \label{tab:inference_times}
    \begin{tabularx}{\linewidth}{XXX}
    \toprule
    \textbf{Model} & \textbf{Runtime ds20 test split [s]} & \textbf{Runtime ds100 test split [s]} \\
    \midrule
    DBGNN-MLP    & 0.364 & 1.61   \\
    TAG-MLP      & 0.288 & 0.795  \\
    DBGNN-CNN    & 0.39  & 1.923  \\
    TAG-CNN      & 0.298 & 1.004  \\
    
    \bottomrule
    \end{tabularx}
\end{table*}

\subsection{Additional Results}
\label{sec_appendix_additional_results}
Example heatmaps for TAG-MLP trained and evaluated on dataset100 are shown in \Cref{fig:heatmaps_TAG_tr100ev100}, while heatmaps for TAG-MLP trained on dataset20 and evaluated on dataset100 are presented in \Cref{fig:heatmaps_TAG_tr20ev100}. Consistent with the observations from \Cref{fig:heatmaps_TAG_tr20ev20}, the predictions capture most of the underlying structure accurately.

\begin{figure*}
    \centering
    \includegraphics[width=0.99\linewidth]{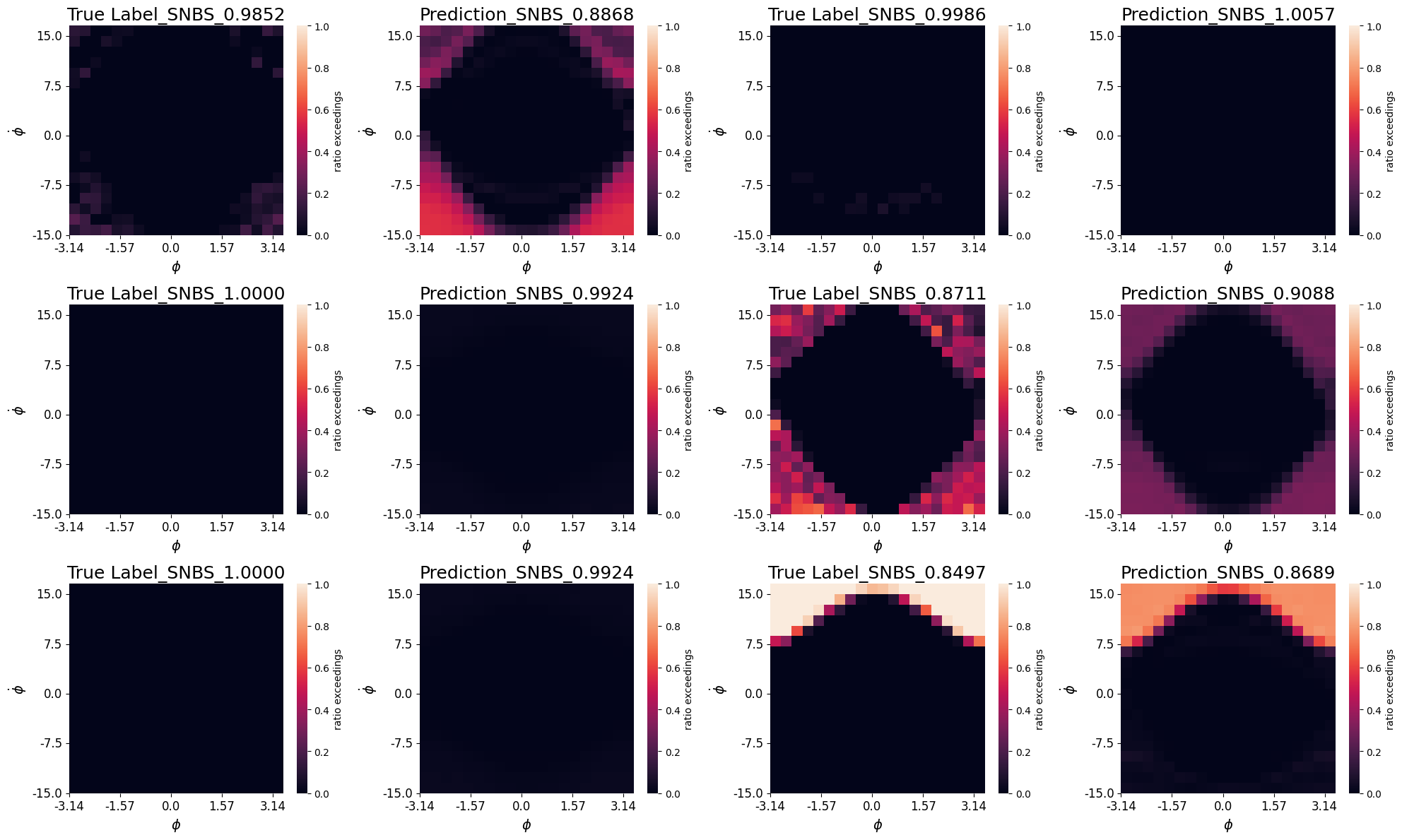}
    \caption{Comparison of true labels and predicted heatmaps for the TAG-MLP model trained and evaluated on dataset100.}
    \label{fig:heatmaps_TAG_tr100ev100}
\end{figure*}

\begin{figure*}
    \centering 
    \includegraphics[width=0.99\linewidth]{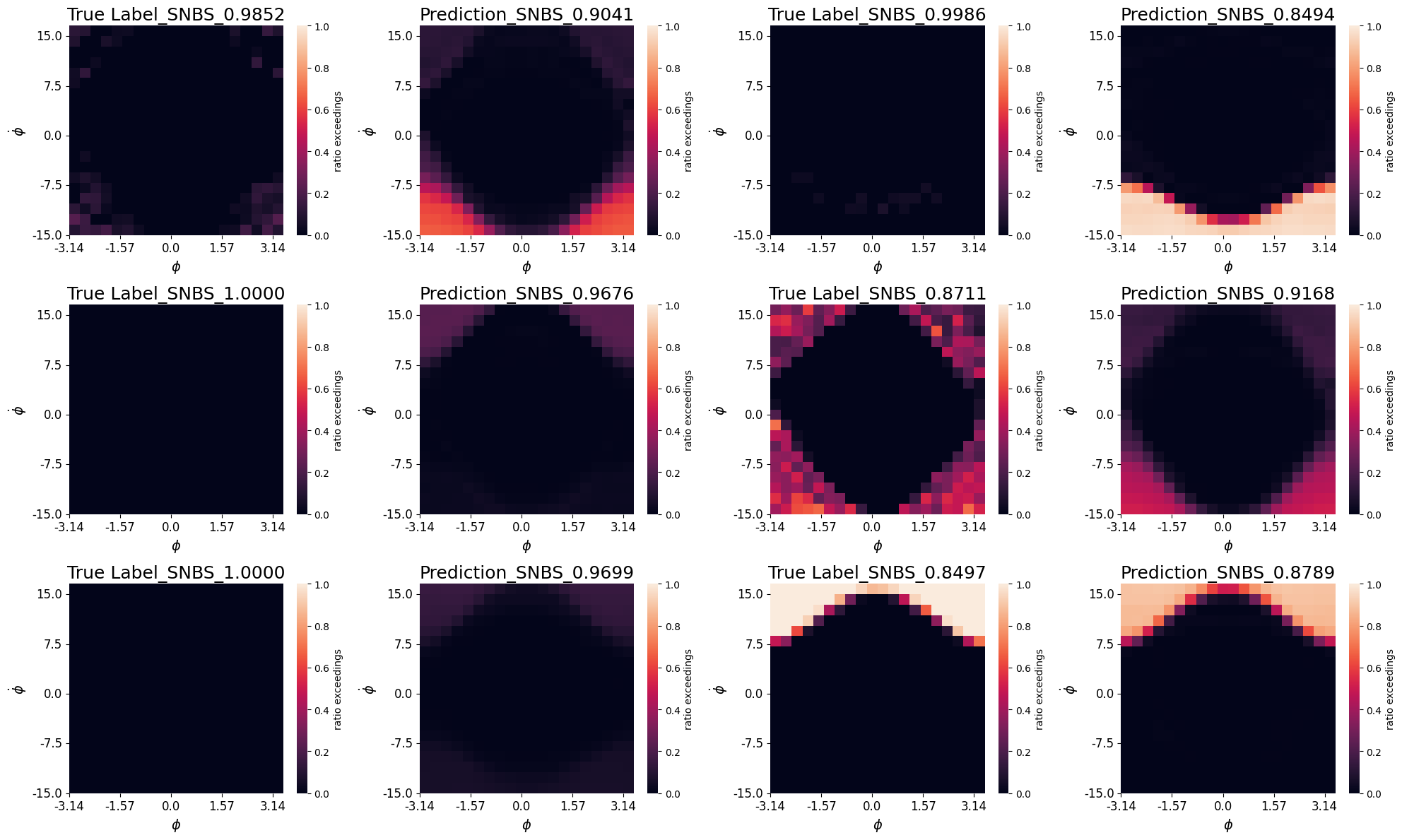}
    \caption{Comparison of true labels and predicted heatmaps for the TAG-MLP model trained on dataset20 and evaluated on dataset100.}
    \label{fig:heatmaps_TAG_tr20ev100}
\end{figure*}

\subsection{Additional results on quantitative comparison of heatmaps}
\label{sec:additional_metrics_heatmaps}

In addition to \Cref{tab:ssim_heatmaps}, we also provide the performances measured by $R^2$ in \Cref{tab:r2_heatmaps} and Learned Perceptual Image Patch Similarity (LPIPS) \Cref{tab:lpips_heatmaps}.

\begin{table*}
\centering
\caption{Performance on predicting the heatmaps of the landscapes measured by $R^2$ in \%.}
\label{tab:r2_heatmaps}
\begin{tabularx}{\linewidth}{XYYYYY}
    \toprule
    \textbf{Model}   & \multicolumn{2}{c}{\textbf{In-Distribution}} & \multicolumn{1}{c}{\textbf{Out-of-Distribution}} \\
    \cmidrule(r){1-1}  \cmidrule(rl){2-3} \cmidrule(l){4-4}
     & tr20ev20 & tr100ev100 & tr20ev100 \\
     \cmidrule(r){1-1}  \cmidrule(rl){2-3} \cmidrule(l){4-4}
    TAG-MLP & 89.77 \tiny{$\pm 0.15$} & 88.42\tiny {$\pm 0.31$} & 68.53 \tiny{$\pm 0.50$} \\
DBGNN-MLP & 90.39\tiny{$\pm 0.09$} & 90.40\tiny{$\pm 0.03$} & 74.27\tiny{$\pm 2.47$} \\
    TAG-CNN & 88.87\tiny{$\pm 0.29$} & 87.76\tiny{$\pm 0.10$} & 65.35\tiny{$\pm 0.65$} \\
DBGNN-CNN & 90.26\tiny{$\pm 0.06$} & 90.08\tiny{$\pm 0.04$} & 73.14\tiny{$\pm 2.27$} \\
    \bottomrule
\end{tabularx}
\end{table*}

\begin{table*}
\vspace{-.2cm}
\centering
\caption{Performance on predicting the heatmaps of the landscapes measured by $LPIPS$ in \%.}
\label{tab:lpips_heatmaps}
\begin{tabularx}{\linewidth}{XYYYYY}
    \toprule
    \textbf{Model}   & \multicolumn{2}{c}{\textbf{In-Distribution}} & \multicolumn{1}{c}{\textbf{Out-of-Distribution}} \\
    \cmidrule(r){1-1}  \cmidrule(rl){2-3} \cmidrule(l){4-4}
     & tr20ev20 & tr100ev100 & tr20ev100 \\
     \cmidrule(r){1-1}  \cmidrule(rl){2-3} \cmidrule(l){4-4}
    TAG-MLP & 0.16 \tiny{$\pm 0.00$} & 0.15 \tiny{$\pm 0.00$} & 0.22 \tiny{$\pm 0.00$} \\
    DBGNN-MLP & 0.16 \tiny{$\pm 0.02$} & 0.14 \tiny{$\pm 0.00$} & 0.20 \tiny{$\pm 0.03$} \\
    TAG-CNN & 0.14 \tiny{$\pm 0.00$} & 0.14 \tiny{$\pm 0.00$} & 0.20 \tiny{$\pm 0.00$} \\
    DBGNN-CNN & 0.12 \tiny{$\pm 0.01$} & 0.13 \tiny{$\pm 0.00$} & 0.15 \tiny{$\pm 0.01$} \\
    \bottomrule
\end{tabularx}
\end{table*}

In \Cref{tab:ev_performance_real_topologies_r2,tab:ev_performance_real_topologies_lpips}, the out-of-distribution performances of the generated heatmaps are shown. The results confirm the overall observations from $SSIM$ from \Cref{tab:ev_performance_real_topologies_ssim}.

\begin{table*}[t]
\centering
\caption{Out-of-distribution performance on predicting the heatmaps of real topologies measured by $R^2$ in \%.}
\label{tab:ev_performance_real_topologies_r2}
\begin{tabularx}{\linewidth}{XYYYY}
    \toprule
    \textbf{Model} & \textbf{tr100evGermany} & \textbf{tr100evFrance} & \textbf{tr100evGB} & \textbf{tr100evSpain} \\
    \cmidrule(r){1-1}  \cmidrule(rl){2-5}
    TAG-MLP & 85.81 \tiny{$\pm 0.71$} & 81.01 \tiny{$\pm 2.11$} & 85.51 \tiny{$\pm 1.29$} & 66.69\tiny{$\pm 3.44$} \\
    DBGNN-MLP & 50.00 \tiny{$\pm 5.58$} & 56.98\tiny{$\pm 4.42$} & 64.68\tiny{$\pm 3.13$} & 48.99\tiny{$\pm 16.04$} \\
    TAG-CNN & 86.36\tiny{$\pm 0.51$} & 82.40\tiny{$\pm 0.50$} & 87.15\tiny{$\pm 0.11$} & 67.41\tiny{$\pm 1.40$} \\
    DBGNN-CNN & 38.12\tiny{$\pm 4.91$} & 53.50\tiny{$\pm 3.77$} & 56.18\tiny{$\pm 5.67$} & 37.61\tiny{$\pm 8.83$} \\
    \bottomrule
\end{tabularx}
\end{table*}

\begin{table*}[t]
\centering
\caption{Out-of-distribution performance on predicting the heatmaps of real topologies measured by $LPIPS$.}
\label{tab:ev_performance_real_topologies_lpips}
\begin{tabularx}{\linewidth}{XYYYY}
    \toprule
    \textbf{Model} & \textbf{tr100evGermany} & \textbf{tr100evFrance} & \textbf{tr100evGB} & \textbf{tr100evSpain} \\
    \cmidrule(r){1-1}  \cmidrule(rl){2-5}
    TAG-MLP & 0.14 \tiny{$\pm 0.00$} & 0.18 \tiny{$\pm 0.00$} & 0.15 \tiny{$\pm 0.00$} & 0.17 \tiny{$\pm 0.01$} \\
    DBGNN-MLP & 0.30 \tiny{$\pm 0.02$} & 0.31 \tiny{$\pm 0.02$} & 0.31 \tiny{$\pm 0.00$} & 0.27 \tiny{$\pm 0.04$} \\
    TAG-CNN & 0.14 \tiny{$\pm 0.00$} & 0.16 \tiny{$\pm 0.00$} & 0.12 \tiny{$\pm 0.00$} & 0.16 \tiny{$\pm 0.00$} \\
    DBGNN-CNN & 0.30\tiny{$\pm 0.03$} & 0.32\tiny{$\pm 0.02$} & 0.33\tiny{$\pm 0.01$} & 0.27\tiny{$\pm 0.03$} \\
    \bottomrule
\end{tabularx}
\end{table*}

\subsection{Downstream task: Predicting SNBS}

The performances on the down-stream task are provided in \Cref{tab:ev_real_topologies_downstream}. The low downstream performance of DBGNN-MLP and the CNN decoders for SNBS prediction likely reflects the unique dynamical properties of the Great Britain grid \citep{nauckPredictingInstabilityComplex2024}, and underscores the complexity of the task.

\begin{table*}[t]
\centering
\caption{Out-of-distribution performance on the downstream task of predicting SNBS measured by $R^2$ in \%.}
\label{tab:ev_real_topologies_downstream}
\setlength{\tabcolsep}{0pt} 
\begin{tabular*}{\linewidth}{@{\extracolsep{4pt}} >{\centering\arraybackslash}p{0.035\linewidth} >{\raggedright\arraybackslash}p{0.16\linewidth} *{4}{>{\centering\arraybackslash}p{0.17\linewidth}}}
    \toprule
    & \textbf{Model} & \textbf{Germany} & \textbf{France} & \textbf{GB} & \textbf{Spain} \\
    \cmidrule(r){1-2}  \cmidrule(rl){3-6}
    \multirow{2}{*}{\rotatebox{90}{MLP}}  & 
     TAG & 89.74\tiny{$\pm 0.75$} & 80.44\tiny{$\pm 2.57$} & 57.71\tiny{$\pm 5.60$} & 66.65\tiny{$\pm 5.18$} \\
    & DBGNN & 43.73\tiny{$\pm 8.23$} & 44.00\tiny{$\pm 7.94$} & $<0$ & 40.91\tiny{$\pm 23.52$} \\
    \cmidrule(r){1-2}  \cmidrule(rl){3-6}
    \multirow{2}{*}{\rotatebox{90}{CNN}} 
   &  TAG & 90.26\tiny{$\pm 0.42$} & 82.19\tiny{$\pm 0.83$} & 63.62\tiny{$\pm 1.25$} & 67.71\tiny{$\pm 1.58$} \\
      & DBGNN & 30.71\tiny{$\pm 7.33$} & 35.13\tiny{$\pm 5.83$} & $< 0 $
      & 21.43\tiny{$\pm 10.80$} \\
    \bottomrule
\end{tabular*}
\end{table*}

\subsection{Downstream task: Identification of contingencies}
\label{app:contingencie_downstream}

\Cref{fig:contingencies_top20critical_node17} visualized the most critical node in the first grid of dataset20. By also showing the second node, where critical cells are identified in \Cref{fig:contingencies_top20critical_node16} all of the most 20 critical cells are visualized, since they only occurred at those two nodes.

\begin{figure}[t]
    \includegraphics[width=\linewidth]{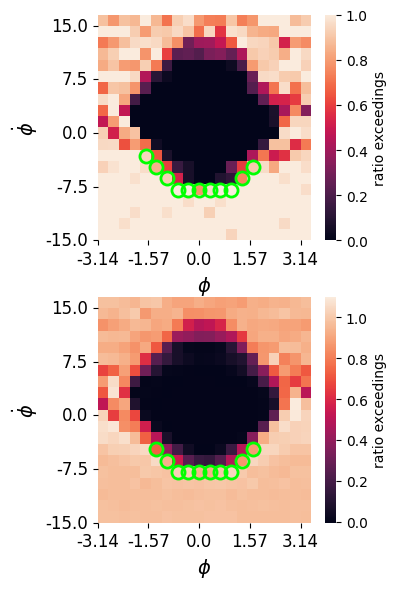}
    \caption{Visualization of critical contingencies for node 16 in the first grid of the test set from dataset20. True labels are shown on the top, and predictions are shown on the bottom.}    \label{fig:contingencies_top20critical_node16}
\end{figure}

\begin{table*}[t]
\centering
\caption{Out-of-distribution
performance on the downstream task of identifying critical contingencies, measured by intersection over union between predicted and true critical cells.}
\label{tab:iou_performance_real_grids}
\begin{tabularx}{\linewidth}{XYYYY}
    \toprule
    \textbf{Model} & \textbf{tr100evGermany} & \textbf{tr100evFrance} & \textbf{tr100evGB} & \textbf{tr100evSpain} \\
    \cmidrule(r){1-1}  \cmidrule(rl){2-5}
    TAG-MLP & 0.62 \tiny{$\pm 0.07$} & 0.18 \tiny{$\pm 0.10$} & 0.27 \tiny{$\pm 0.02$} & 0.05 \tiny{$\pm 0.01$} \\
    DBGNN-MLP & 0.08 \tiny{$\pm 0.03$} & 0.00 \tiny{$\pm 0.00$} & 0.03 \tiny{$\pm 0.02$} & 0.03 \tiny{$\pm 0.03$} \\
    TAG-CNN & 0.65 \tiny{$\pm 0.03$} & 0.29 \tiny{$\pm 0.11$} & 0.33 \tiny{$\pm 0.00$} & 0.22 \tiny{$\pm 0.13$} \\
    DBGNN-CNN & 0.09 \tiny{$\pm 0.14$} & 0.00 \tiny{$\pm 0.00$} & 0.08 \tiny{$\pm 0.02$} & 0.05 \tiny{$\pm 0.02$} \\
    \bottomrule
\end{tabularx}
\end{table*}

\Cref{tab:relaxed_downstream_in_distr,tab:relaxed_downstream_ood} report the less strict metric for which the proportion of cases in which all 20 most critical cells appear within the predictor’s top 30. This approach softens the original strict requirement while remaining practically relevant, as it tests whether the model can identify the most critical cells within a reasonably constrained set.

\begin{table*}[t]
\centering
\caption{Performance on predicting the relaxed down stream task of predicting critical contingencies, based on the proportion of cases in which all 20 most critical cells appear within the predictor’s top 30.}
\label{tab:relaxed_downstream_in_distr}
\begin{tabularx}{\linewidth}{XYYY}
    \toprule
    \textbf{Model} & \textbf{tr20ev20} & \textbf{tr100ev100} & \textbf{tr20ev100} \\
    TAG-MLP & 0.75\tiny{$\pm 0.01$} & 0.56\tiny{$\pm 0.01$} & 0.46\tiny{$\pm 0.03$} \\
    DBGNN-MLP & 0.75\tiny{$\pm 0.03$} & 0.67\tiny{$\pm 0.01$} & 0.54\tiny{$\pm 0.04$} \\
    TAG-CNN & 0.79\tiny{$\pm 0.01$} & 0.65\tiny{$\pm 0.03$} & 0.53\tiny{$\pm 0.02$} \\
    DBGNN-CNN & 0.78\tiny{$\pm 0.03$} & 0.72\tiny{$\pm 0.01$} & 0.57\tiny{$\pm 0.04$} \\
       
    \bottomrule
\end{tabularx}
\end{table*}

\begin{table*}[t]
\centering
\caption{Out-of-distribution performance on predicting the  relaxed down stream task of predicting critical contingencies, based on the proportion of cases in which all 20 most critical cells appear within the predictor’s top 30.}
\label{tab:relaxed_downstream_ood}
\begin{tabularx}{\linewidth}{XYYYY}
    \toprule
    \textbf{Model} & \textbf{tr100evGermany} & \textbf{tr100evFrance} & \textbf{tr100evGB} & \textbf{tr100evSpain} \\
    TAG-MLP & 0.82\tiny{$\pm 0.03$} & 0.29\tiny{$\pm 0.15$} & 0.43\tiny{$\pm 0.03$} & 0.09\tiny{$\pm 0.02$} \\
    DBGNN-MLP & 0.14\tiny{$\pm 0.05$} & 0.00\tiny{$\pm 0.00$} & 0.09\tiny{$\pm 0.07$} & 0.05\tiny{$\pm 0.05$} \\   
    TAG-CNN & 0.86\tiny{$\pm 0.04$} & 0.44\tiny{$\pm 0.12$} & 0.53\tiny{$\pm 0.03$} & 0.35\tiny{$\pm 0.18$} \\
    DBGNN-CNN & 0.15\tiny{$\pm 0.21$} & 0.00\tiny{$\pm 0.00$} & 0.16\tiny{$\pm  0.04$} & 0.10\tiny{$\pm 0.04$} \\
    \bottomrule
\end{tabularx}
\end{table*}    

\subsection{Investigating the impact of grid resolution}
\label{app_grid_resolution}

To evaluate the effect of grid resolution on the analysis, we test various cell densities beyond the standard $20 \times 20$ configuration. \Cref{fig:heatmaps_num_sections_comparison} shows heatmaps generated with three different resolutions: 5, 10, 20, and 30 cells per axis. The $20 \times 20$ grid represents an good balance between capturing fine spatial structures and maintaining sufficient sample density within each cell for statistical reliability.

\begin{figure*}
    \centering 
    \begin{subfigure}{0.45\textwidth}
        \centering
        \includegraphics[width=\textwidth]{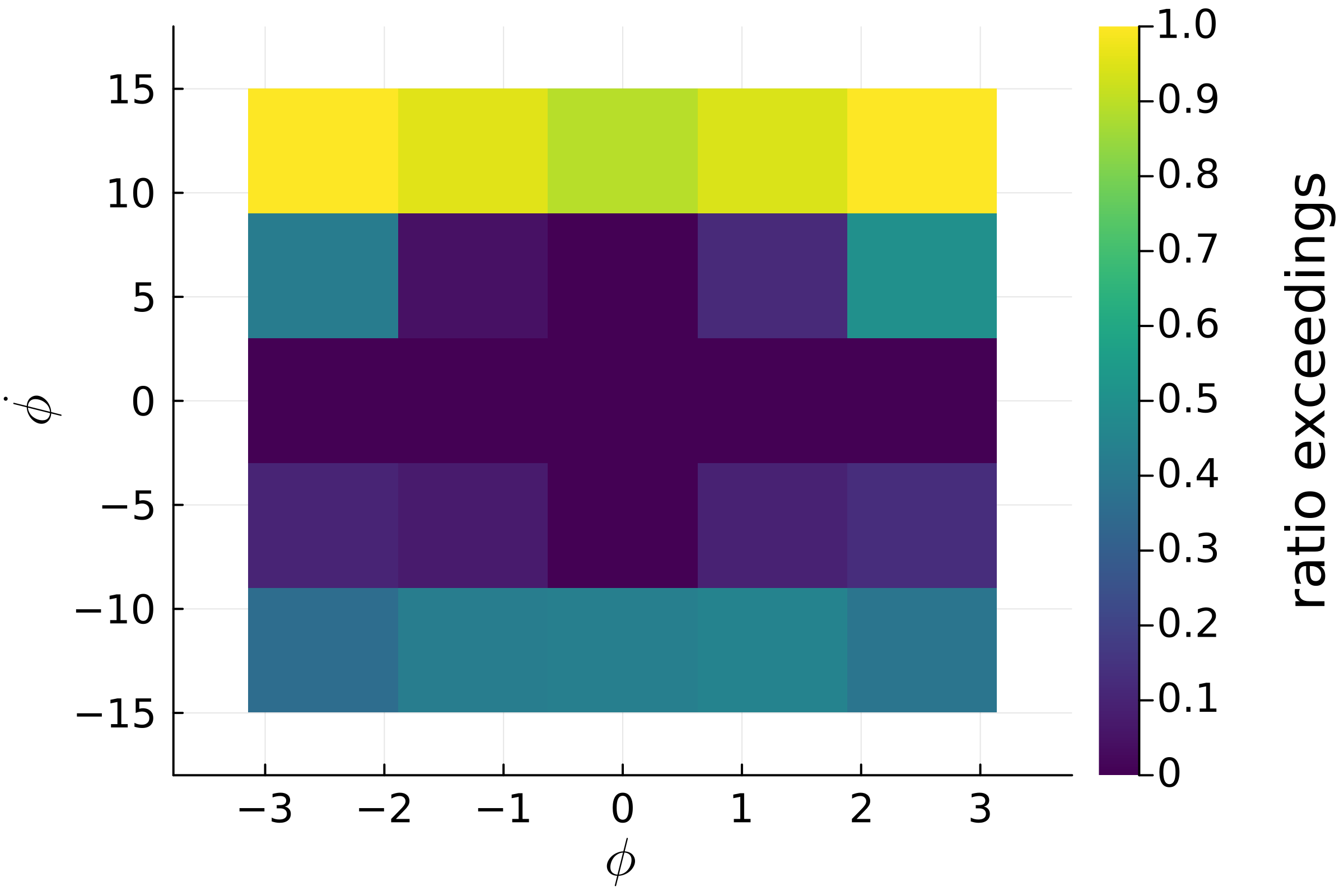}
        \caption{grid with $5\times 5 $ cells}
    \end{subfigure}
    \hfill
    \begin{subfigure}{0.45\textwidth}
        \centering
        \includegraphics[width=\textwidth]{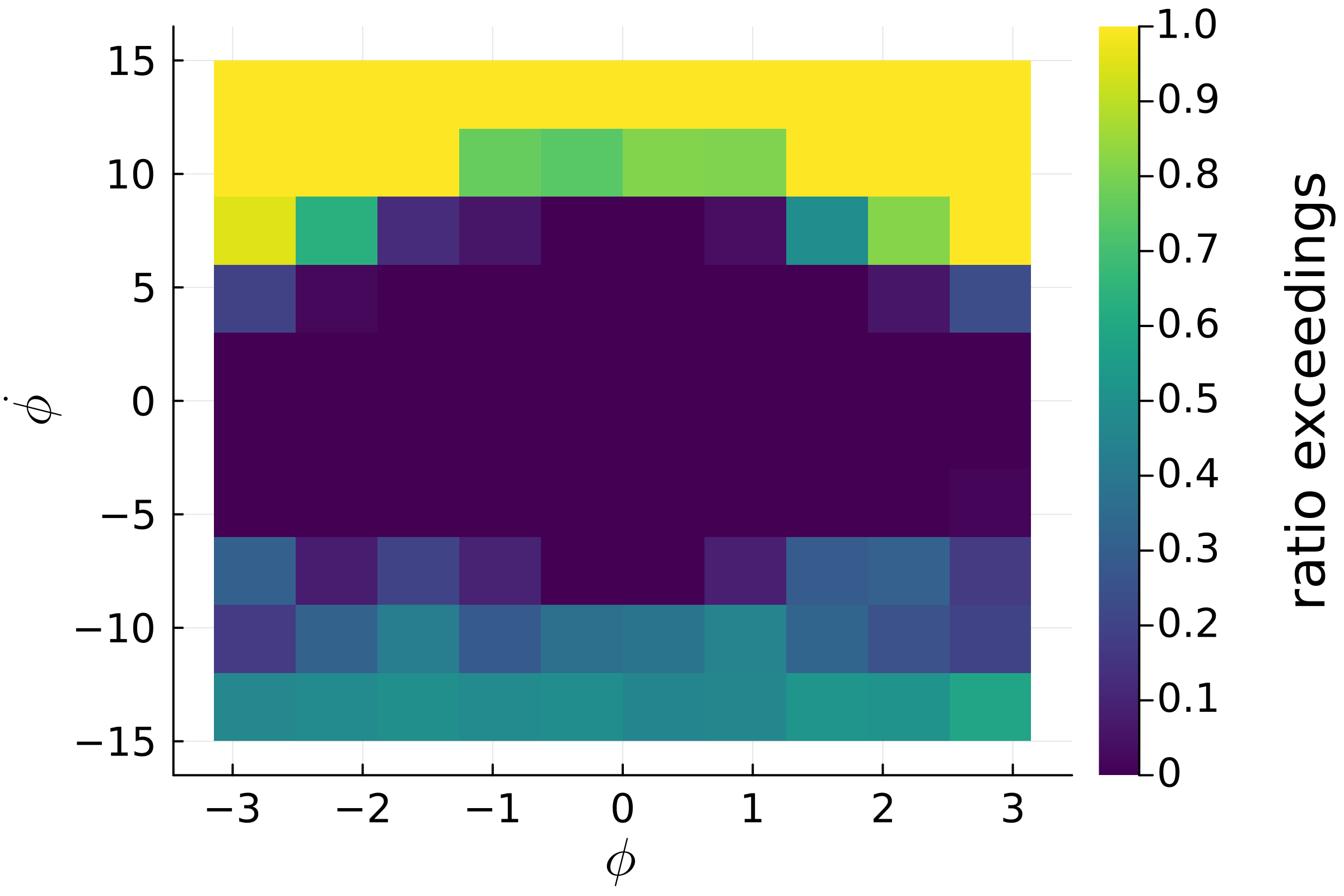}
        \caption{Grid with $10 \times 10 $ cells }
    \end{subfigure}
    
    \vspace{0.5cm}
    
    \begin{subfigure}{0.45\textwidth}
        \centering
        \includegraphics[width=\textwidth]{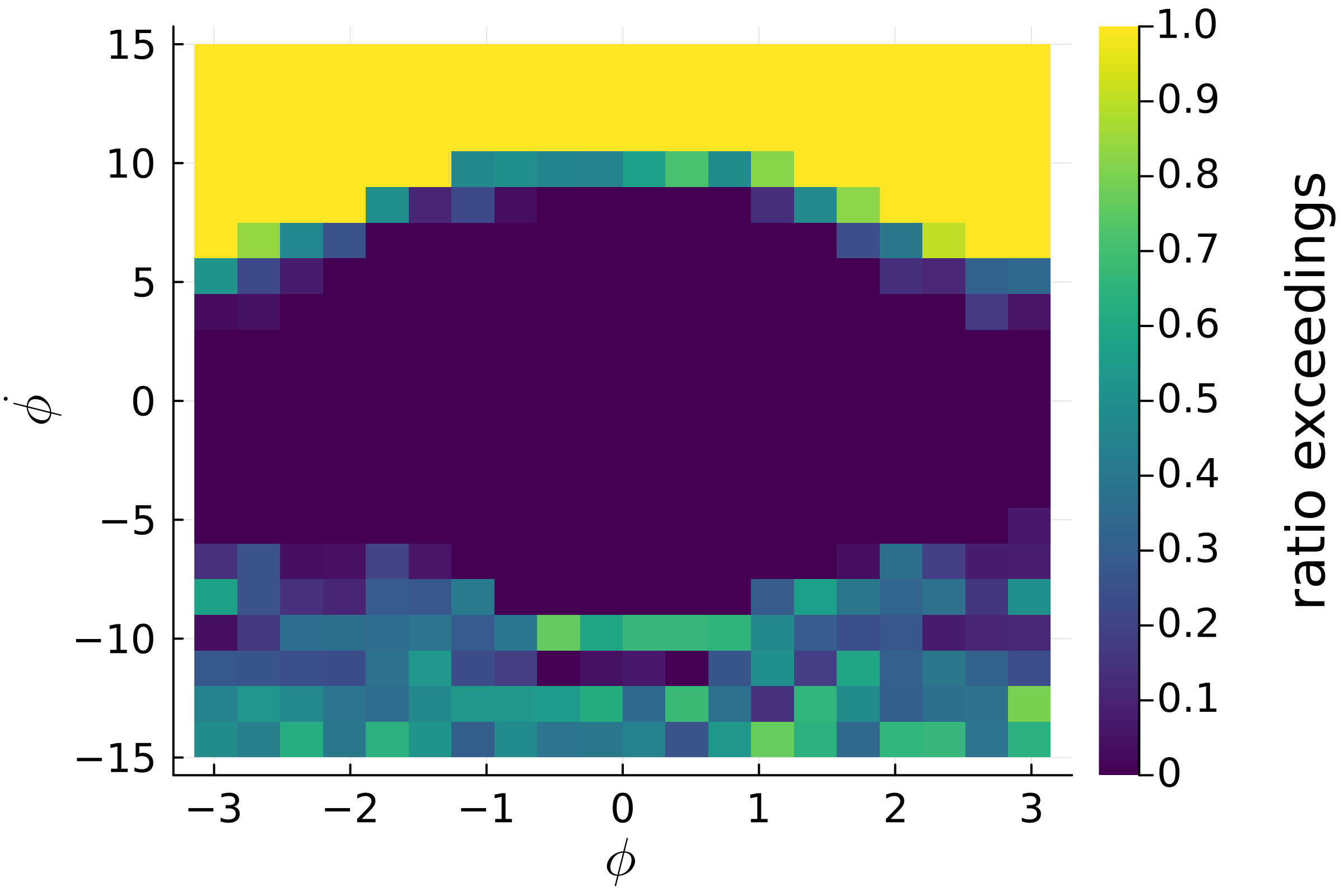}
        \caption{Grid with $20\times 20 $ cells }
    \end{subfigure}
    \hfill
    \begin{subfigure}{0.45\textwidth}
        \centering
        \includegraphics[width=\textwidth]{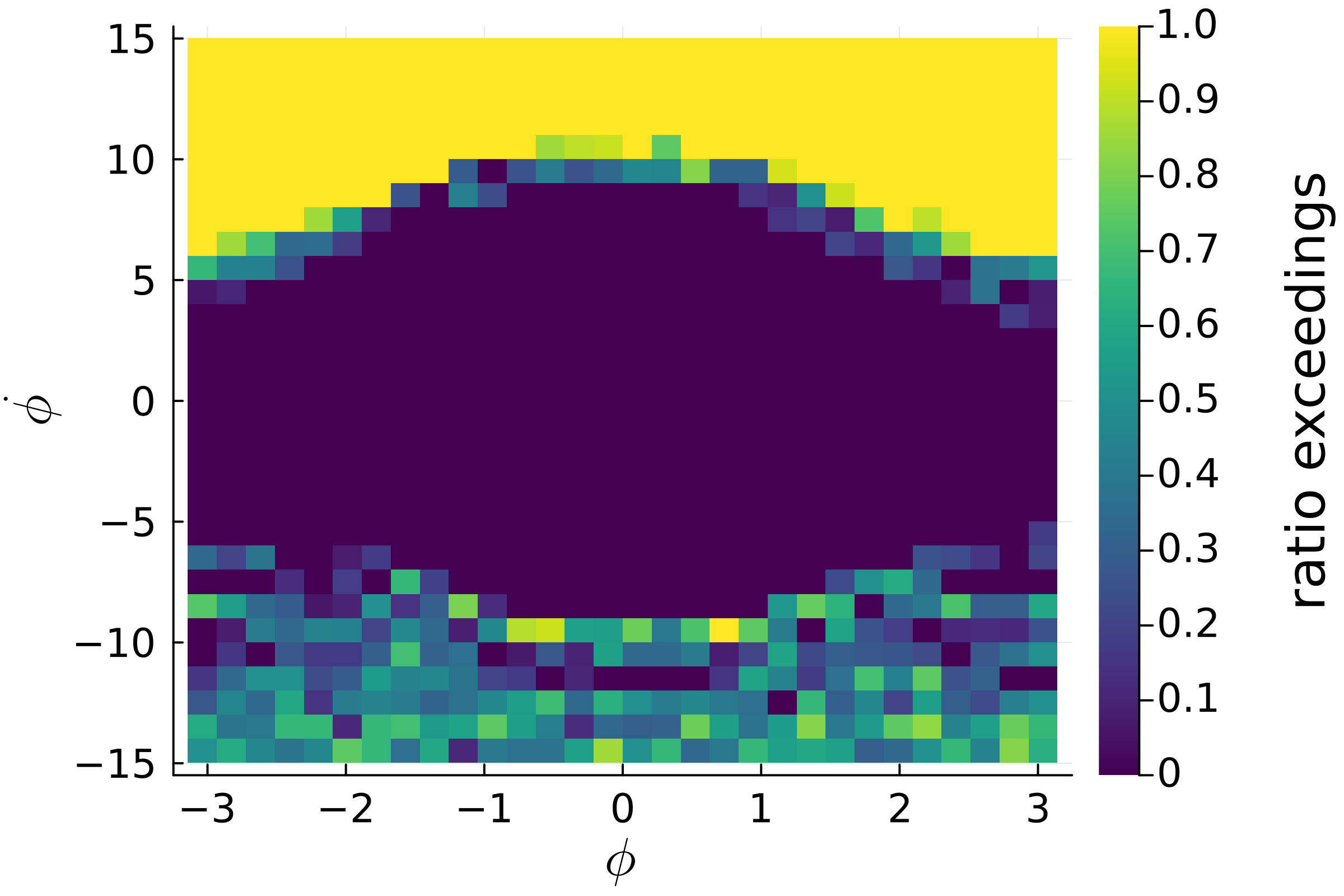}
        \caption{Grid with $30 \times 30 $ cells }
    \end{subfigure}
    
    \caption{Comparison of the grid size of the heatmaps with the following number per axis: 5, 10, 20, 30.}
    \label{fig:heatmaps_num_sections_comparison}
\end{figure*}

\begin{table*}
\vspace{-.2cm}
\centering
\caption{Performance on predicting the heatmaps of the landscapes measured by SSIM in \% with a grid size of $10 \times 10$.}
\label{tab:ssim_heatmaps_ns10}
\begin{tabularx}{\linewidth}{XYYYYY}
    \toprule
    \textbf{Model}   & \multicolumn{2}{c}{\textbf{In-Distribution}} & \multicolumn{1}{c}{\textbf{Out-of-Distribution}} \\
    \cmidrule(r){1-1}  \cmidrule(rl){2-3} \cmidrule(l){4-4}
     & tr20ev20 & tr100ev100 & tr20ev100 \\
     \cmidrule(r){1-1}  \cmidrule(rl){2-3} \cmidrule(l){4-4}
    TAG-MLP-ns10 & 81.59\tiny{$\pm 0.52$} & 81.25\tiny{$\pm 1.02$} & 67.14\tiny{$\pm 0.89$} \\
    TAG-MLP & 81.37\tiny{$\pm 1.58$} & 81.41\tiny{$\pm 0.23$} & 71.11\tiny{$\pm 1.47$} \\
    \bottomrule
\end{tabularx}
\end{table*}

\begin{table*}
\vspace{-.2cm}
\centering
\caption{Performance on predicting the heatmaps of the landscapes measured by LPIPS in \% with a grid size of $10 \times 10$.}
\label{tab:lpips_heatmaps_ns10}
\begin{tabularx}{\linewidth}{XYYYYY}
    \toprule
    \textbf{Model}   & \multicolumn{2}{c}{\textbf{In-Distribution}} & \multicolumn{1}{c}{\textbf{Out-of-Distribution}} \\
    \cmidrule(r){1-1}  \cmidrule(rl){2-3} \cmidrule(l){4-4}
     & tr20ev20 & tr100ev100 & tr20ev100 \\
     \cmidrule(r){1-1}  \cmidrule(rl){2-3} \cmidrule(l){4-4}
    TAG-MLP-ns10 & 0.10\tiny{$\pm 0.00$} & 0.10\tiny{$\pm 0.00$} & 0.18\tiny{$\pm 0.00$} \\
    TAG-MLP & 0.16\tiny{$\pm 0.00$} & 0.15\tiny{$\pm 0.00$} & 0.22\tiny{$\pm 0.00$} \\
    \bottomrule
\end{tabularx}
\end{table*}

\begin{table*}
\vspace{-.2cm}
\centering
\caption{Performance on predicting the heatmaps of the landscapes measured by $R^2$ in \% with a grid size of $10 \times 10$. For a grid size of $20 \times 20$, the full results are shown in \Cref{tab:r2_heatmaps}.}
\label{tab:r2_heatmaps_ns10}
\begin{tabularx}{\linewidth}{XYYYYY}
    \toprule
    \textbf{Model}   & \multicolumn{2}{c}{\textbf{In-Distribution}} & \multicolumn{1}{c}{\textbf{Out-of-Distribution}} \\
    \cmidrule(r){1-1}  \cmidrule(rl){2-3} \cmidrule(l){4-4}
     & tr20ev20 & tr100ev100 & tr20ev100 \\
     \cmidrule(r){1-1}  \cmidrule(rl){2-3} \cmidrule(l){4-4}
    TAG-MLP-ns10 & 90.64\tiny{$\pm 0.13$} & 88.92\tiny{$\pm 0.04$} & 70.85\tiny{$\pm 1.13$} \\
    TAG-MLP & 89.77\tiny{$\pm 0.15$} & 88.42\tiny{$\pm 0.31$} & 68.53\tiny{$\pm 0.50$} \\
    \bottomrule
\end{tabularx}
\end{table*}

\begin{table*}[t]
\centering
\caption{Performance on the downstream task of predicting SNBS based on heatmap predictions  measured by $R^2$ in \% with a grid size of $10 \times 10$. For a grid size of $20 \times 20$, the full results are shown in \Cref{tab:snbs_performance}.}
\label{tab:snbs_performance_ns10}
\begin{tabularx}{\linewidth}{XYYYYY}
    \toprule
    \textbf{Model}   & \multicolumn{2}{c}{\textbf{In-Distribution}} & \multicolumn{1}{c}{\textbf{Out-of-Distribution}} \\
    \cmidrule(r){1-1}  \cmidrule(rl){2-3} \cmidrule(l){4-4}
     & tr20ev20 & tr100ev100 & tr20ev100 \\
    TAG-MLP-ns10 & 82.07\tiny{$\pm 0.32$} & 85.70\tiny{$\pm 0.02$} & 63.36\tiny{$\pm 1.76$} \\
    TAG-MLP & 82.61\tiny{$\pm 0.46$} & 86.60\tiny{$\pm 0.50$} & 60.48\tiny{$\pm 1.39$} \\
    \bottomrule
\end{tabularx}
\end{table*}

The performances in \Cref{tab:ssim_heatmaps_ns10,tab:lpips_heatmaps_ns10,tab:r2_heatmaps_ns10} and \Cref{tab:snbs_performance_ns10} confirm that the approach is feasible with different grid sizes. The reached performance levels are similar.

Furthermore, we analyze the impact of the grid resolution on the down-stream task of identifying critical contingencies in \Cref{tab:impact_grid_resolution_iou}. As expected, performance improves at lower resolution (fewer cells).

\begin{table*}[h!]
    \centering
    \caption{Impact of grid resolution when identifying critical contingencies measured by IoU.}
    \label{tab:impact_grid_resolution_iou}
    \begin{tabularx}{\textwidth}{p{4cm}XXX}
    \toprule
    \textbf{Model} & \textbf{tr20ev20} & \textbf{tr100ev100} & \textbf{tr20ev100} \\
    \midrule
    TAG-MLP-ns10 & 0.70\tiny{$\pm 0.01$} & 0.50\tiny{$\pm 0.02$} & 0.43\tiny{$\pm 0.01$} \\
    TAG-MLP & 0.58\tiny{$\pm 0.01$} & 0.41\tiny{$\pm 0.01$} & 0.32\tiny{$\pm 0.02$} \\
    \bottomrule
    \end{tabularx}
\end{table*}

\subsection{Training curves of TAG-MLP and DBGNN-MLP}
\Cref{fig:training_curves} shows that TAG training is much smoother, reaching reasonable performance within a few steps, whereas DBGNN training is more difficult early on (including negative Image  in the first epochs). Later, in-distribution performance improves rapidly, while generalization to real-world topologies saturates. This suggests that TAG is smoother overall, and smoother models are often associated with better generalization. 

However, we do not necessarily believe the different out-of-distribution performance is a property of the layer itself, but maybe simply related to the specific hyperparameters in this study. Future work can test different hyperparameters to see what model properties show increased out-of-distribution generalization.

\begin{figure*}
    \centering
    \includegraphics[width=\textwidth]{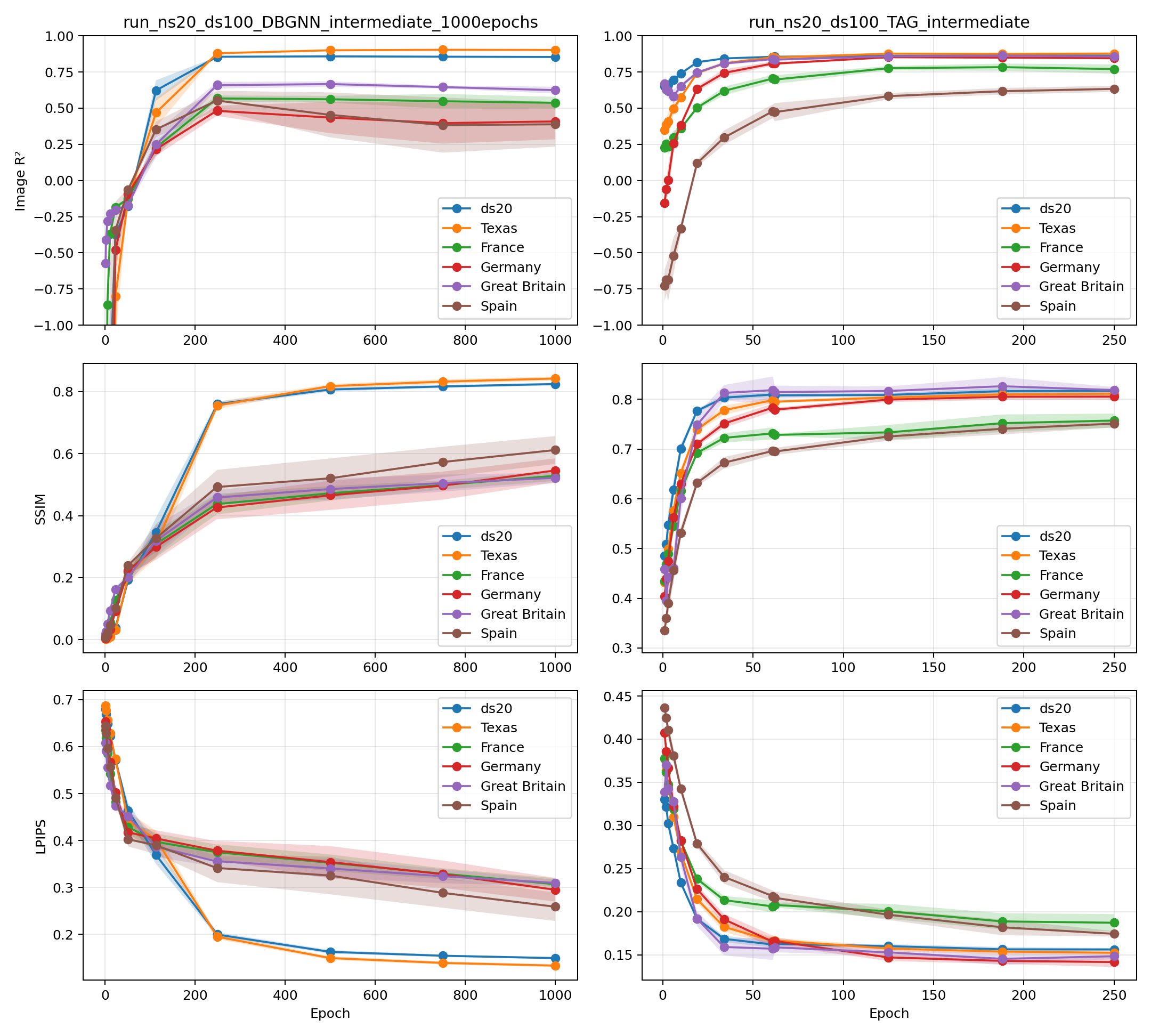}
    \caption{Training curves of DBGNN-MLP and TAG-MLP show the smoothness of TAG, which may show the strong out-of-distribution generalization.}
    \label{fig:training_curves}
\end{figure*}

\subsection{Influence of size of training samples}
\Cref{tab:influence_training_size} reports how performance (SSIM, \%) varies with the number of training grids. Increasing the number of grids improves accuracy, while the method remains effective even with fewer training grids.

\begin{table*}[h!]
    \centering
    \caption{Influence of number of training samples evaluated using SSIM in \%.}
    \label{tab:influence_training_size}
    \begin{tabularx}{\textwidth}{p{4cm}XXX}
    \toprule
    \textbf{Model} & \textbf{tr20ev20} & \textbf{tr100ev100} &\textbf{tr20ev100} \\
    \midrule
    TAG-MLP (1000 grids) & 76.89\tiny{$\pm 0.76$} & 80.53\tiny{$\pm 0.38$} & 65.42\tiny{$\pm 1.22$} \\
    TAG-MLP (3000 grids) & 80.27\tiny{$\pm 0.48$} & 81.83\tiny{$\pm 0.43$} & 70.86\tiny{$\pm 0.48$} \\
    TAG-MLP & 81.37\tiny{$\pm 1.58$} & 81.41\tiny{$\pm 0.23$} & 71.11\tiny{$\pm 1.47$} \\
    \bottomrule
    \end{tabularx}
\end{table*}

In the literature, prior work on basin-stability prediction (Fig. 4 in \cite{nauckPredictingInstabilityComplex2024}) shows that GNN performance generally improves with larger training sets, while still remaining viable with fewer samples. That study also suggests that the key driver is the number of training nodes rather than the number of distinct grids. In this sense, our comparison between dataset20 and dataset100 provides an implicit test of training-set size effects: dataset100 contains roughly five times more training nodes. This difference is likely an important contributor to the performance gap between tr100ev100 and tr20ev20.

\subsection{Ablation study to show long-range dependencies}
To demonstrate the advantage of models that enable considering long-range dependencies, we conduct ablation studies on the number of TAG layers for dataset20 and dataset 100. \Cref{tab:ablation_study_TAG} demonstrates the advantage of using more than 5 convolutional layers.

Previous work already demonstrated the advantage of models capable of considering long-range dependencies (\cite{nauckDynamicStabilityAssessment2023,nauckDiracBianconiGraphNeural2024}, such as TAG and DBGNN outperforming GCNs.

\begin{table*}[h!]
    \centering
    \caption{Ablation studies varying the number of convolutional layers of TAG-MLP measured by SSIM in \%.}
    \label{tab:ablation_study_TAG}
    \begin{tabularx}{\textwidth}{p{1.55cm}p{0.6cm}XXX}
    \toprule
    \textbf{Model} & \textbf{layers} & \textbf{tr20ev20}  & \textbf{tr20ev100} & \textbf{tr100ev100}\\
    \midrule
    TAG-MLP & 3 & 78.74\tiny{$\pm 0.49$} & 78.12\tiny{$\pm 0.40$} & 68.63 \tiny{$\pm 0.44$} \\
    TAG-MLP & 5 & 81.37\tiny{$\pm 1.58$} & 81.41\tiny{$\pm 0.23$} & 71.11\tiny{$\pm 1.47$} \\
    TAG-MLP & 8 & 82.16\tiny{$\pm 0.65$} & 82.30\tiny{$\pm 0.36$} & 72.93\tiny{$\pm 0.63$} \\
    \bottomrule
    \end{tabularx}
\end{table*}

\end{document}